\documentclass[10pt]{IEEEtran}

\usepackage{amssymb}
\usepackage{amsmath,mathtools,steinmetz}
\usepackage{amsthm}
\usepackage{stmaryrd}
\usepackage{paralist}
\usepackage{tikz,pgfplots,subfigure}
\usepackage{eurosym}
\usepackage{epstopdf,multicol,float}
\usepackage[hide links]{hyperref}
\usepackage{cleveref}
\usepackage{bm}
\usepackage{upgreek}
\usepackage{dsfont}
\usepackage{color}


\makeatletter
\def\ps@IEEEtitlepagestyle{%
  \def\@oddfoot{\mycopyrightnotice}%
  \def\@evenfoot{}%
}
\def\mycopyrightnotice{%
  {\footnotesize This work has been submitted to the IEEE for possible publication. Copyright may be transferred without notice, after which this version may no longer be accessible.\hfill}
    \gdef\mycopyrightnotice{}
  }

\IEEEoverridecommandlockouts

\newtheorem{prop}{Proposition}
\newtheorem{thm}{Theorem}
\newtheorem{cor}{Corollary}
\newtheorem{lem}{Lemma}
\theoremstyle{remark}
\newtheorem{rem}{Remark}
\newtheorem{exa}{Example}
\theoremstyle{definition}
\newtheorem{definition}{Definition}

\newcommand{\dom}{\operatorname{dom}}

\newcommand{\R}{\mathbb{R}}
\newcommand{\inner}[2]{\langle #1,#2\rangle}
\newcommand{\Rp}{\R_{\geqslant 0}}
\newcommand{\Z}{\mathbb{Z}}
\newcommand{\Zp}{\mathbb{N}}

\newcommand{\bba}{\bm{\alpha}}
\newcommand{\bbz}{\mathbf{z}}
\newcommand{\bbx}{\mathbf{x}}

\newcommand{\lse}{\mathrm{LSE}}
\newcommand{\lset}{\lse_{T}}
\newcommand{\Rpp}{\R_{>0}}
\newcommand{\mD}{\mathcal{C}}
\newcommand{\mLD}{\mathcal{L}}
\newcommand{\bbu}{\mathbf{u}}

\newcommand{\bby}{\mathbf{y}}
\newcommand{\bbv}{\mathbf{v}}

\newcommand{\eps}{\varepsilon}
\newcommand{\pos}{\mathrm{POS}}
\newcommand{\gpos}{\mathrm{GPOS}_T}
\newcommand{\bbq}{\mathbf{q}}

\newcommand{\bbb}{\bm{\beta}}

\newcommand{\mN}{\mathcal{N}}

\newcommand{\ma}{\mathrm{MA}}
\newcommand{\mS}{\mathcal{S}}
\newcommand{\ffnn}{\mathrm{FFNN}}

\newcommand{\bbxi}{\bm{\xi}}
\newcommand{\tran}{^{\top}}
\newcommand{\lseffnn}{\mathrm{LSE\text{-}FFNN}}

\newcommand{\beq}{\begin{equation}}
\newcommand{\eeq}{\end{equation}}

\usepackage[colorinlistoftodos,bordercolor=orange,backgroundcolor=orange!20,linecolor=orange,textsize=scriptsize]{todonotes}

\title{Log-sum-exp neural networks and posynomial models for  convex and log-log-convex data}

\author{\thanks{G. C. Calafiore and C. Possieri are with the Dipartimento di Elettronica e Telecomunicazioni, Politecnico di Torino, 10129 Turin, Italy
(e-mail: giuseppe.calafiore@polito.it, corrado.possieri@polito.it). \newline
\indent G. C. Calafiore is also associated with IEIIT-CNR Torino, Italy.\newline 
\indent S. Gaubert is with INRIA and CMAP, Ecole polytechnique, UMR 7641 CNRS, France (e-mail: stephane.gaubert@inria.fr). 
}
Giuseppe C. Calafiore, \IEEEmembership{Fellow,~IEEE}, Stephane Gaubert, \IEEEmembership{Member,~IEEE} \\and Corrado Possieri, \IEEEmembership{Associate Member,~IEEE}}

\begin{document}

\maketitle

\begin{abstract}
We show in this paper that a one-layer feedforward neural network with  exponential activation functions in the inner layer and logarithmic activation in the output neuron is an universal approximator of convex functions. Such a network represents a family of scaled log-sum exponential functions,  here named $\lset$. Under a suitable exponential transformation, the class of
$\lset$ functions maps to a family of generalized posynomials $\gpos$, 
which we similarly show to be universal approximators for log-log-convex functions.

A key feature of an $\lset$ network is that, once it  is trained on data, the resulting model is {\em convex} in the variables, which makes it readily amenable to efficient design based on convex optimization. Similarly, once a $\gpos$ model is trained on data, it yields a posynomial model that can be efficiently optimized with respect to its variables by using geometric programming (GP).

The proposed methodology is illustrated  by two numerical examples, 
in which, first, models are constructed from simulation data
of the two physical processes   
(namely, the level of vibration in a vehicle suspension system, and the peak power generated by the  combustion of propane), and then 
optimization-based design is performed on  these models.
\end{abstract}

\begin{IEEEkeywords}
Feedforward neural networks, 
Surrogate models,
Data-driven optimization,  Convex optimization,
Tropical polynomials, 
Function approximation,
Geometric programming. 
\end{IEEEkeywords}

\section{Introduction}
\subsection{Motivation and context}
A key challenge that has to be faced when dealing with real-word engineering analysis and design problems
is to find a model for a process or apparatus that is able to correctly interpret the observed data. 
The advantages of having at
one's disposal a mathematical model include enabling the analysis of extreme situations, 
the verification of decisions, the avoidance of time-consuming and expensive experimental tests or intensive numerical simulations, 
and the  possibility of optimizing over model parameters for the purpose of design.  
In this context, a tradeoff must be typically made between the accuracy of the model (here broadly intended as the capacity of the model in reproducing the experimental or simulation data) and its complexity, insofar as the former usually increases with the complexity of the model.  Actually, the use of  ``simple'' models of complex fenomena is gaining increasing interest in engineering design; examples are the so-called {\em surrogate models} constructed from complex simulation data arising, for instance, in aerodynamics modeling, see, e.g., \cite{YoAnVa:18, forrester2008engineering, gorissen2010surrogate}.

In particular, if the purpose of the model is performing optimization-based design, then it becomes of paramount importance to have a model that is suitably tailored for optimization. To this purpose, it is well known that an extremely advantageous property for a model to possess is {\em convexity}, see, e.g., \cite{calafiore2014optimization, boyd2004convex}.
In fact, if the objective and  constraints in an optimization-based design problem are convex, then efficient tools
(such as interior-point methods, see, e.g., \cite{potra2000interior}) can be used to solve the problem in an efficient, global and guaranteed sense. 
Conversely,  finding the solution to a generic nonlinear programming problem may be extremely difficult,
involving  compromises such as  long computation time or suboptimality of the solution; \cite{boyd2004convex,rockafellar1993lagrange}. 
Clearly, not all real-world models are convex, but several relevant ones are indeed convex, or can anyways be {\em approximated} by convex ones. In all such cases it is  of critical importance to be able to construct convex models from the available data.

The focus of this work is on the construction of functional models from data, possessing the desirable property of convexity.
Several tools have been proposed in the literature to fit data via  convex or log-log-convex functions
(see Section~\ref{subsec:posy} for a definition of log-log convexity).
Some remarkable examples are, for instance, \cite{magnani2009convex}, where an efficient least-squares partition algorithm 
is proposed to fit data through max-affine functions; \cite{kim2010convex}, where a similar method  has been
proposed to fit max-monomial functions; \cite{hoburg2016data}, where a technique based on 
fitting the data through implicit softmax-affine functions has been proposed; and 
\cite{daems2003simulation,calafiore2015sparse}, 
where methods to fit data through posynomial models have been proposed.

\subsection{Contributions}
Since the pioneering works \cite{cybenko1989approximation,white1990connectionist,HORNIK1991251}, artificial feedforward neural networks
have been widely used to find models apt at describing the data, see, e.g.,   \cite{ruck1990multilayer,vt1994radial,andras2014function}.
However, the input-output map represented by a neural network need not possess properties such as convexity, and hence
the ensuing model is in general unsuitable for optimization-based design.

The main objective of this paper is to show that, if the activation function of the hidden layer and of the output layer are 
properly chosen, then it is possible to design a feedforward neural network with one hidden layer that fits the data and that 
represents a  convex function of the inputs.
 Such a goal is pursued by studying the properties of the
log-sum-exp (or softmax-affine)  $\lset$ class of functions, by showing that they can be represented through a feedforward  neural network, and by proving that they posses universal approximator properties with respect to convex functions;  this constitutes our main result, stated in \Cref{cor:mainresult} and specialized in  \Cref{cor:convAppr} and \Cref{cor:convApprFFNN}. 
Furthermore, 
we show that an exponential transformation maps the class of $\lset$ functions
into the generalized posynomial family  $\gpos$, which can be used for fitting log-log convex data, as stated in \Cref{cor:mainresult2}, \Cref{cor:llconvappr}, and 
\Cref{cor:convApprFFNN_GPOS}.
Our approximation proofs rely in part on {\em tropical} techniques.
The application of tropical geometry to neural networks
is an emerging topic --- two recent works have used tropical methods
to provide combinatorial estimates, in terms of Newton polytopes,
of the ``classifying power'' of neural networks
with piecewise affine functions, see~\cite{charisopoulos2017morphological, lim}. Although there is no direct relation with the present results,
a comparison of these three works does suggest tropical methods
may be of further interest in the learning-theoretic context.

We flank the theoretical results in this paper with  a
numerical \texttt{Matlab} toolbox, named  \texttt{Convex\_Neural\_Network}, which we developed and made freely available on the web\footnote{See 
\url{https://github.com/Corrado-possieri/convex-neural-network/}}. This toolbox implements the proposed class
of feedforward neural networks, and
it has been used for the numerical experiments reported in the examples section.  

Convex neural networks are important in engineering applications in the context of construction of { surrogate models} for describing and optimizing complex input-output relations. We provide  examples of application
to two complex physical processes:
the amount of vibration transmitted  by a vehicle suspension system as a function of its mechanical parameters,
and the peak power generated by the combustion reaction  of propane as a function of the initial concentrations of the involved
chemical species. 

\subsection{Organization of the paper}
The remainder of this paper is organized as follows: in Section~\ref{sec:notation} we introduce the notation and we give some preliminary results about the classes of functions under consideration.
In Section~\ref{sec:approx}, we illustrate the approximation capabilities of the considered classes of functions, by establishing
that generalized log-sum-exp functions and generalized posynomials are universal smooth approximators of convex and
log-log-convex data, respectively. 
In Section~\ref{sec:algo}, we show the correspondence between these functions and
feedforward neural networks with properly chosen activation function. The effectiveness of the proposed approximation 
technique in realistic applications is highlighted in Section~\ref{sec:appli}, where the $\lset$ class is used to
perform data-driven optimization of two physical phenomena. Conclusions are given in Section~\ref{sec:concl}.

\section{Notation and technical preliminaries\label{sec:notation}}

Let $\Zp$, $\Z$, $\R$, $\Rp$, and $\Rpp$ denote the set of natural, integer, real, nonnegative real,
and positive real numbers, respectively.
{Given $\bbxi\in\R^n$, $\delta_{\bbxi}$ denotes the Dirac measure on the set $\{\bbxi\}\subset \R^n$.
The vectors $\bbxi_0,\dots,\bbxi_k\in\R^n$ are \emph{linearly independent} if 
$c_0\,\bbxi_0+\dots+c_k\,\bbxi_k\neq\bm{0}$
for all $c_0,\dots,c_k\in\R$ not identically zero, whereas they are 
\emph{affinely independent} if $\bbxi_1-\bbxi_0,\dots,\bbxi_k-\bbxi_0$ are linearly independent.}
Given $f:\R^n\rightarrow\R\cup\{+\infty\}$, 
{let \[\dom f\doteq\{\bbx\in\R^n:f(\bbx)<+\infty \}.\]
Supposing that
$\dom f\neq \emptyset$,} define the \emph{Fenchel transform} 
$f^\star:\R^n\rightarrow\R\cup\{+\infty\}$ of $f$ as 
\[f^\star(\bbx^\star)=\sup_{\bbx\in\R^n}(\inner{\bbx^\star}{\bbx}-f(\bbx)),\]
where $\inner{\bbx}{\bby}$ denotes an inner product; in particular, the standard inner product
 $\inner{\bbx}{\bby}\doteq\bbx\tran\bby$ will be assumed all throughout this paper. 
By the Fenchel-Moreau theorem, \cite{borwein2010convex}, it results that $f=f{}^{\star}{}^{\star}$ if and only if $f$ is convex and lower semicontinuous,
whereas, in general, it holds that $f\geqslant f{}^{\star}{}^{\star}$. We shall assume henceforth
that all the considered convex functions are {\em proper}, 
meaning that
their domain is nonempty.

\subsection{The Log-Sum-Exp class of functions\label{subsec:lse}}
Let $\lse$ (Log-Sum-Exp) be the class of  functions $f:\R^n\rightarrow\R$ that can be written as
\begin{equation}
f(\bbx) = \log \left(\sum_{k=1}^K b_k \exp ( \inner{\bba^{(k)}}{\bbx} )\right),
\label{eq:lse_lse_reg}
\end{equation}
for some $K\in\Zp$, $b_k\in\Rpp$, $\bba^{(k)}=[\begin{array}{ccc}
\alpha_1^{(k)} & \cdots & \alpha_n^{(k)}
\end{array}]^\top\in\R^n$, $k=1,\dots,K$, where
 $\bbx=[\begin{array}{ccc}
x_1 & \cdots & x_n
\end{array}]^\top$ is a vector of variables. Further, given $T\in\Rpp$ (usually referred to as the
\emph{temperature}), define the
class $\lset$ of  functions $f_T:\R^n\rightarrow\R$ that can be written as
\begin{equation}
f_T(\bbx) = T \log \left(\sum_{k=1}^K b_k^{1/T} \exp ( \inner{\bba^{(k)}}{\bbx/T} )\right),
\label{eq:lse_lse_reg_trop}
\end{equation}
for some $K\in\Zp$, $b_k\in\Rpp$, and $\bba^{(k)}\in\R^n$, $k=1,\dots,K$. 
By letting
$\beta_k \doteq \log b_k$, $ k=1,\ldots,K$,
we have that functions in the family  $\lset$ can be equivalently parameterized as
\begin{equation}
f_T(\bbx) = T \log \left(\sum_{k=1}^K \exp ( \inner{\bba^{(k)}}{\bbx/T} + \beta_k/T )\right),
\label{eq:lse_lse_reg_trop_exp}
\end{equation}
where the $\beta_k$s have no sign restrictions.
It may sometimes be convenient to highlight the full parameterization 
of $f_T$, in which case we shall write $f_T^{(\overrightarrow{\bba},\bbb)}$,
where $\overrightarrow{\bba} = (\bba^{(1)},\ldots,\bba^{(K)})$, and
$\bbb =  (\beta_1,\ldots,\beta_K)$.
It can then be observed that,  for any $T>0$, the following property holds:
\beq
f_T^{(\overrightarrow{\bba},\bbb)} (\bbx) =  T f_1^{(\overrightarrow{\bba},\bbb/T)} (\bbx/T).
\label{eq:scaling}
\eeq
A key fact is  that each $f_T\in\lset$ is smooth and convex. 
Indeed, letting $\mu$ be a positive
Borel measure on $\R^n$,  following the terminology
of \cite{klartag2012centroid}, the {\em log-Laplace transform}
of $\mu$~is 
\begin{equation}\label{eq:loglaplacetransform}
M(\bbx) \doteq \log\left( \frac{1}{\mu(\R^n)}   \int_{\R^n} \exp(\inner{\bm{\tau}}{\bbx})\,\mathrm{d}\mu(\bm{\tau}) \right).
\end{equation}
The convexity of this function is well known, 
being a direct consequence of H\"older's inequality.
Hence, letting 	$\mu = \sum_{k=1}^K b_k \,\delta_{\bba^{(k)}}$ be a sum of Dirac measures,
we obtain that each $f\in\lse$ is convex.
The convexity of all $f_T\in\lset$ follows immediately 
by the fact that convexity is preserved under positive scaling. On the other hand, the smoothness of each 
$f_T\in\lset$ follows by the smoothness of the functions $\exp(\cdot)$ and $\log(\cdot)$ in their domain.
The interest in this class of functions arises from the fact that, as established in the subsequent~\Cref{cor:mainresult},
functions in $\lset$ are universal smooth approximators of convex functions.

In the following proposition, we show that if 
the points with coordinates
$\bba^{(1)},\dots,\bba^{(K)}$ constitute an affine generating family of $\R^n$,
or, equivalently, if one
can extract $n+1$ affinely independent vectors from $\bba^{(1)},\dots,\bba^{(K)}$ then the function $f_T(\bbx)$ given in \eqref{eq:lse_lse_reg_trop} is strictly convex.
In dimension $2$, this condition means that the family of points
of coordinates $\bba^{(1)},\dots,\bba^{(K)}$ contains the vertices of a triangle;
in dimension $3$,  the same family must contain the vertices of a tetraehedron, and so on.

\begin{prop}\label{prop:strictConv}
The function $f_T(\bbx)$ given in \eqref{eq:lse_lse_reg_trop}  is strictly convex whenever the vectors 
$\bba^{(1)},\dots,\bba^{(K)}$ constitute an affine generating family of $\R^n$.
\end{prop}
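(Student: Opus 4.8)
The plan is to reduce to the case $T=1$ using the scaling identity~\eqref{eq:scaling}, and then to show that the Hessian of the resulting log-sum-exp function is positive definite at every point of $\R^n$; since a twice continuously differentiable function with everywhere positive-definite Hessian on a convex set is strictly convex, this will suffice.

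First I would invoke~\eqref{eq:scaling}: $f_T^{(\overrightarrow{\bba},\bbb)}(\bbx) = T\, f_1^{(\overrightarrow{\bba},\bbb/T)}(\bbx/T)$. As $T>0$ and $\bbx\mapsto\bbx/T$ is an invertible linear change of variables, $f_T$ is strictly convex if and only if $f_1^{(\overrightarrow{\bba},\bbb/T)}$ is, and the affine-generating hypothesis on $\bba^{(1)},\dots,\bba^{(K)}$ is untouched. So it is enough to work with $f\doteq f_1 = \log\!\big(\sum_{k=1}^K \exp(\inner{\bba^{(k)}}{\bbx}+\beta_k)\big)$.

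Next I would introduce the (everywhere strictly positive) normalized weights $w_k(\bbx)\doteq \exp(\inner{\bba^{(k)}}{\bbx}+\beta_k)\big/\sum_{j=1}^K\exp(\inner{\bba^{(j)}}{\bbx}+\beta_j)$, which satisfy $\sum_k w_k(\bbx)=1$. A routine computation yields $\nabla f(\bbx)=\sum_k w_k(\bbx)\bba^{(k)}$ and
\[
\nabla^2 f(\bbx) = \sum_{k=1}^K w_k(\bbx)\,\bba^{(k)}(\bba^{(k)})\tran - \Big(\sum_{k=1}^K w_k(\bbx)\,\bba^{(k)}\Big)\Big(\sum_{k=1}^K w_k(\bbx)\,\bba^{(k)}\Big)\tran,
\]
so that $\nabla^2 f(\bbx)$ is the covariance matrix of the discrete random vector equal to $\bba^{(k)}$ with probability $w_k(\bbx)$. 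Consequently, for every $\bbv\in\R^n$, $\bbv\tran\nabla^2 f(\bbx)\,\bbv$ is the variance of $\inner{\bba^{(k)}}{\bbv}$ under these weights, hence $\geqslant 0$ (recovering the convexity already noted after~\eqref{eq:loglaplacetransform}).

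The crux is to show this variance vanishes only for $\bbv=0$. If $\bbv\tran\nabla^2 f(\bbx)\,\bbv=0$, then $\inner{\bba^{(k)}}{\bbv}$ is the same for all indices $k$ with $w_k(\bbx)>0$; but every weight is strictly positive, so $\inner{\bba^{(k)}-\bba^{(1)}}{\bbv}=0$ for $k=1,\dots,K$. Since $\bba^{(1)},\dots,\bba^{(K)}$ is an affine generating family of $\R^n$, the differences $\bba^{(2)}-\bba^{(1)},\dots,\bba^{(K)}-\bba^{(1)}$ span $\R^n$, which forces $\bbv=0$. Hence $\nabla^2 f(\bbx)$ is positive definite for every $\bbx\in\R^n$, $f$ is strictly convex, and by the reduction above so is $f_T$. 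I do not foresee a real obstacle: the Hessian computation is mechanical, and the only substantive step is reading off the degeneracy of the covariance matrix as an affine-dependence relation among the $\bba^{(k)}$. If one prefers to bypass differentiation, an equivalent route is to apply the equality case of H\"older's inequality to the log-Laplace transform~\eqref{eq:loglaplacetransform} of $\mu=\sum_k b_k\,\delta_{\bba^{(k)}}$, with essentially the same bookkeeping.
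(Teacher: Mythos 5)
Your proof is correct and follows essentially the same route as the paper's: both identify the Hessian of the log-sum-exp function as the covariance matrix of a discrete random vector supported on the points $\bba^{(k)}$ (the paper does this via the log-Laplace transform of $\mu=\sum_k b_k\,\delta_{\bba^{(k)}/T}$, citing the general Hessian-equals-covariance identity), and both conclude positive definiteness from the affine-generating hypothesis. Your version merely carries out the Hessian computation and the analysis of the degenerate case explicitly instead of citing a reference, and adds a harmless preliminary reduction to $T=1$.
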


\begin{proof}
Let $\mu$ be a positive Borel measure on $\R^n$.
For every $\bbxi\in\R^n$, consider the random variable
$\mathbf{X}_{\bbxi}$, whose distribution $\nu_{\bbxi}$, absolutely continuous with respect
to $\mu$, has the Radon-Nikodym derivative $\frac{\mathrm{d}\nu_{\bbxi}}{\mathrm{d}\mu}$ 
equal to $\bbx \mapsto \exp(\inner{\bbxi}{\bbx})$. It can be checked that the Hessian of the log-Laplace
transform of $\mu$ is 
$\nabla^2M(\bbxi) = \mathrm{Cov}(\mathbf{X}_{\bbxi})$,
where $\mathrm{Cov}(\cdot)$ denotes the covariance matrix of the random variable at argument, see the proof of \cite[Prop~7.2.1]{brazitikos}.
Hence, as soon as the support of the distribution of $\mathbf{X}_{\bbxi}$
contains $n+1$ affinely independent points, this
covariance matrix is positive definite, which entails
the strict convexity of $M$. The proposition follows
by considering the log-Laplace transform of
$\mu=\sum_{k=1}^K b_k\, \delta_{\bba^{(k)}/T}$,
in which the support of $\mu$ is $\{\bba^{(1)}/T,\dots,\bba^{(K)}/T\}$.
\end{proof}

\begin{rem}\label{rk-affine}
If the points with coordinates $\bba^{(1)},\dots,\bba^{(K)}$ do not constitute an affine generating family of $\R^n$, we can find a vector $\bbu\in\R^n$ such that
$\langle \bbu , \bba^{(k)}-\bba^{(1)}\rangle =0$ for $k=2,\dots,K$. It follows
that 
\begin{align}
f_T(\bbx + s \bbu) = s \langle\bba^{(1)},\bbu \rangle + f_T(\bbx),\qquad \forall s\in\R,
\end{align}
showing that $f_T$ is affine in the direction $\bbu$. 
\end{rem}

We next observe that the function class $\lset$ enlarges as $T$ decreases, as stated more precisely in the following lemma. 
\begin{lem}\label{lem:nested}
For all  $T>0$ and each $p\in\Zp$, $p\geqslant 1$, one has
\[\lset\subset \lse_{T/p}.\]
\end{lem}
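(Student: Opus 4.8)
The plan is to rely on the elementary identity $T\log S=\tfrac{T}{p}\log\bigl(S^{p}\bigr)$, valid for every $S>0$, together with the fact that, for a \emph{positive integer} $p$, the $p$-th power of a finite sum of exponentials is again a finite sum of exponentials, by the multinomial theorem. This is exactly the point where the integrality of $p$ is used: for a non-integer $p$ the expansion of $S^{p}$ would involve infinitely many terms.

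Concretely, I would start from an arbitrary $f_{T}\in\lset$ written in the form \eqref{eq:lse_lse_reg_trop_exp}, set $S(\bbx)\doteq\sum_{k=1}^{K}\exp\bigl(\inner{\bba^{(k)}}{\bbx/T}+\beta_{k}/T\bigr)$, so that $f_{T}(\bbx)=T\log S(\bbx)=\tfrac{T}{p}\log\bigl(S(\bbx)^{p}\bigr)$, and expand
\[
S(\bbx)^{p}=\sum_{\substack{m_{1},\dots,m_{K}\geqslant 0\\ m_{1}+\cdots+m_{K}=p}}\binom{p}{m_{1},\dots,m_{K}}\exp\!\Bigl(\bigl\langle\textstyle\sum_{k}m_{k}\bba^{(k)},\ \bbx/T\bigr\rangle+\tfrac{1}{T}\textstyle\sum_{k}m_{k}\beta_{k}\Bigr),
\]
where the sum ranges over the finitely many $K$-tuples of nonnegative integers summing to $p$.

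It then remains to recognise the right-hand side as an element of $\lse_{T/p}$, which is pure bookkeeping of the temperature rescaling. For each multi-index $\bm m=(m_{1},\dots,m_{K})$ I would set the new slope vector $\bbc^{(\bm m)}\doteq\tfrac{1}{p}\sum_{k}m_{k}\bba^{(k)}$, so that $\bigl\langle\sum_{k}m_{k}\bba^{(k)},\bbx/T\bigr\rangle=\inner{\bbc^{(\bm m)}}{\bbx/(T/p)}$, and the new offset $\gamma_{\bm m}\doteq\tfrac{1}{p}\sum_{k}m_{k}\beta_{k}+\tfrac{T}{p}\log\binom{p}{m_{1},\dots,m_{K}}$, chosen so that $\tfrac{1}{T}\sum_{k}m_{k}\beta_{k}+\log\binom{p}{m_{1},\dots,m_{K}}=\gamma_{\bm m}/(T/p)$. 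With these choices $f_{T}(\bbx)=\tfrac{T}{p}\log\!\bigl(\sum_{\bm m}\exp(\inner{\bbc^{(\bm m)}}{\bbx/(T/p)}+\gamma_{\bm m}/(T/p))\bigr)$, which is precisely the canonical form of an element of $\lse_{T/p}$. Equivalently, in the $b$-parameterisation \eqref{eq:lse_lse_reg_trop}, the positive weight attached to the $\bm m$-th exponential is $\binom{p}{m_{1},\dots,m_{K}}^{T/p}\prod_{k}b_{k}^{m_{k}/p}>0$, so the sign constraints of the class are respected.

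I do not expect any genuine obstacle here; the only subtlety is keeping track of the fact that in $\lse_{T/p}$ the exponents are scaled by $T/p$ rather than by $T$, and observing that the multinomial coefficients, being strictly positive, can be harmlessly absorbed either into the offsets $\gamma_{\bm m}$ or into the positive weights. A closing remark could note that the inclusion is in general strict, and that the construction may be iterated over several values of $p$.
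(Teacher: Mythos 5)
Your proof is correct and follows essentially the same route as the paper's: write $f_T = \tfrac{T}{p}\log\bigl(S^p\bigr)$, expand the integer power of the sum of exponentials, and absorb the resulting coefficients into the parameters of an element of $\lse_{T/p}$. If anything, your version is more explicit than the paper's, since you write out the multinomial expansion and the exact new slopes $\bbc^{(\bm m)}$ and offsets $\gamma_{\bm m}$, whereas the paper only asserts that the expanded terms ``retain the format of the original terms... with suitably modified parameters.''
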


\begin{proof} By definition, for a function $f_T \in \lset$ there exist
$T>0$, $b_k >0$ and $\bba^{(k)}$, $k=1,\ldots,K$, such that
\begin{align*}
f_T(\bbx) &= T \log \left(\sum_{k=1}^K b_k^{1/T} \exp ( \inner{\bba^{(k)}}{\bbx/T} )\right) 
\\
&\hspace{-3ex}= (T/p) \log \left( \sum_{k=1}^K b_k^{1/T} \exp (\inner{\bba^{(k)}}{\bbx/T} ) \right)^p 
\\&\hspace{-3ex}=  (T/p) \log \left( \sum_{k=1}^K (b_k^{1/p})^{p/T} \exp (\inner{\bba^{(k)}/p}{\bbx/(T/p)}) \right)^p  \\
&\hspace{-3ex}=  (T/p) \log  \left(\sum_{k=1}^{K'} \tilde b_k^{p/T} \exp ( \inner{\tilde{\bba}^{(k)}}{\bbx/(T/p)})\right),
\end{align*}
where the last equality follows from the observation that, by expanding the  (integer) power $p$, we obtain a summation over $K'\geqslant K$ terms,
each of which has the form of products of terms taken from the larger parentheses. 
These terms retain the  format
of the original terms in the parentheses, only with suitably modified parameters $\tilde b_k$ and $\tilde \bba^{(k)}$. 
The claim then follows by observing that the last expression represents a function in $\lse_{T/p}$.
\end{proof}

Consider now the class $\ma$ of \emph{max-affine functions} with $K$ terms,
i.e., the class of all the functions that can be written as
\begin{equation}
\label{eq:maxaffine}
\bar{f}(\bbx)\doteq \max_{k=1,\dots,K} ( \beta_k+\inner{\bba^{(k)}}{\bbx} ).
\end{equation} 
When the entries of $\bba^{(k)}$ are nonnegative integers, the function $\bar{f}$
is called a {\em tropical polynomial},~\cite{viro,itenberg}. Allowing these entries to be relative
integers yields the class of {\em Laurent tropical polynomials}.
When these entries are real, by analogy with classical
posynomials (see Section~\ref{subsec:posy}), the function $\bar{f}(\bbx)$ is sometimes referred to as a \emph{tropical posynomial}.
Note that the class of $\ma$ functions has been recently used in learning problems, \cite{charisopoulos2017morphological}, 
\cite{lim}, and in data fitting, see \cite{magnani2009convex}
and \cite{hoburg2016data}.
Such functions are convex, since the function obtained by taking the point-wise maximum of convex functions is convex.
It follows  from the parameterization in (\ref {eq:lse_lse_reg_trop_exp})  that, for all $\bbx\in\R^n$,
$\lim_{T \searrow 0}f_T(\bbx)=\bar{f}(\bbx)$,
i.e., the function $f_T$ given in \eqref{eq:lse_lse_reg_trop_exp}  approximates
$\bar{f}$ as $T$ tends to zero, see \cite{hoburg2016data}.
This deformation 
is familiar in tropical geometry under the name of ``Maslov dequantization,''~\cite{litvinov},
and it is a key ingredient of Viro's patchworking method,~\cite{viro}. 
The following uniform bounds are rather standard, but their formal proof is given here for completeness.



\begin{lem}\label{prop:approx}
For any $T\in\Rpp$, $f_T$ in \eqref{eq:lse_lse_reg_trop_exp},  and for all $\bbx\in\R^n$, it holds that
\begin{equation}
\bar f (\bbx)  \leqslant  f_T(\bbx) \leqslant T\log K + \bar f (\bbx).
\label{eq:metric_estimate}
\end{equation}
\end{lem}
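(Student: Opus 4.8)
The plan is to squeeze the sum appearing inside the logarithm in \eqref{eq:lse_lse_reg_trop_exp} between its largest summand and $K$ times its largest summand, and then to apply the monotone map $T\log(\cdot)$. Concretely, I would first write $\bar f(\bbx)=\max_{k=1,\dots,K}(\beta_k+\inner{\bba^{(k)}}{\bbx})$ and note that, in the parameterization \eqref{eq:lse_lse_reg_trop_exp}, the $k$-th summand is exactly $\exp\bigl((\beta_k+\inner{\bba^{(k)}}{\bbx})/T\bigr)$. Since $T>0$, the function $t\mapsto\exp(t/T)$ is increasing, so each summand is at most $\exp(\bar f(\bbx)/T)$, while the summand attaining the maximum in the definition of $\bar f$ equals precisely $\exp(\bar f(\bbx)/T)$.

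Combining these two observations gives the two-sided estimate
\[\exp\!\bigl(\bar f(\bbx)/T\bigr)\;\leqslant\;\sum_{k=1}^{K}\exp\!\bigl((\beta_k+\inner{\bba^{(k)}}{\bbx})/T\bigr)\;\leqslant\;K\,\exp\!\bigl(\bar f(\bbx)/T\bigr).\]
Applying $T\log(\cdot)$, which is increasing for $T>0$, to each of the three members and using $T\log\bigl(K\exp(\bar f(\bbx)/T)\bigr)=T\log K+\bar f(\bbx)$ then yields $\bar f(\bbx)\leqslant f_T(\bbx)\leqslant T\log K+\bar f(\bbx)$, that is, \eqref{eq:metric_estimate}, for every $\bbx\in\R^n$.

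I do not expect any real obstacle here: once the summands are rewritten in the exponential-of-scaled-affine form coming from \eqref{eq:lse_lse_reg_trop_exp}, the result is a one-line squeeze argument based only on the monotonicity of $\exp$ and $\log$. The sole point requiring a little care is bookkeeping of the inner factor $1/T$ in the exponents against the outer factor $T$ in front of the logarithm, which cancel exactly and make the bound clean and independent of $T$ except through the additive term $T\log K$.
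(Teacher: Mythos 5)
Your argument is correct and is essentially the same as the paper's: both proofs squeeze the sum $\sum_{k}\exp\bigl((\beta_k+\inner{\bba^{(k)}}{\bbx})/T\bigr)$ between its largest term and $K$ times that term, then apply the increasing map $T\log(\cdot)$ and use $T\log\bigl(K\exp(\bar f(\bbx)/T)\bigr)=T\log K+\bar f(\bbx)$. No gaps.
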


\begin{proof}
By construction, we have that
\begin{align*}
\bar{f}(\bbx) & = \max_{k=1,\dots,K} ( \beta_k+\inner{\bba^{(k)}}{\bbx} )\\
& =\max_{k=1,\dots,K}T\log((\exp (\beta_k+ \inner{\bba^{(k)}}{\bbx} ))^{1/T})\\
& = T\log\left(\max_{k=1,\dots,K}(\exp (\beta_k+ \inner{\bba^{(k)}}{\bbx }))^{1/T}\right)\\
& \leqslant T\log\left(\sum_{k=1}^K (\exp (\beta_k+ \inner{\bba^{(k)}}{\bbx} ))^{1/T}\right)\\
& = f_T(\bbx),
\end{align*}
thus proving the left-hand side of the inequality in \eqref{eq:metric_estimate}. On the other hand,
we have that
\begin{align*}
f_T(\bbx) & = T\log\left(\sum_{k=1}^K\exp (\beta_k/T+ \inner{\bba^{(k)}}{\bbx/T} )\right)\\
&  \leqslant T\log\left( K\left(\exp \left(\max_{k=1,\dots,K}(\beta_k+ \inner{\bba^{(k)}}{\bbx} )\right)\right)^{1/T}\right)\\
& =  T\log( K(\exp(\bar{f}(\bbx))^{1/T}))\\
&= T\log K + \bar f (\bbx),
\end{align*}
thus proving the right-hand side of the inequality in \eqref{eq:metric_estimate}.
\end{proof}

\subsection{Posynomials\label{subsec:posy}}

Given $c_k\in\Rpp$ and $\bba^{(k)}\in\R^n$, a \emph{positive monomial}
is a product of the form $c_k\bbx^{\bba^{(k)}} = c_k x_1^{\alpha_1^{(k)}}x_2^{\alpha_2^{(k)}}\cdots x_n^{\alpha_n^{(k)}}$.
A \emph{posynomial} is a finite sum of positive monomials,
\begin{equation}
\psi(\bbx) = \sum_{k=1}^K  c_k \bbx^{\bba^{(k)}}.
\label{eq:POS}
\end{equation}
Posynomials are thus functions $\psi:\Rpp^n\rightarrow\Rpp$; we let $\pos$ denote the class of all posynomial functions.

\begin{definition}[Log-log-convex function]
A function   $\varphi(\bbx):\Rpp^n\rightarrow\Rpp$  is log-log-convex
if $\log \varphi$ is convex in $\log (\bbx)$.
\end{definition}

A positive monomial function $\varphi_k(\bbx) \doteq c_k\bbx^{\bba^{(k)}}$ 
 is clearly \emph{log-log-convex}, since $\log \varphi_k(\bbx)$ is linear (hence convex)
 in $\log \bbx$.
 Log-log convexity of functions in the $\pos$ family 
can be derived from the following proposition, which goes back to Kingman,~\cite{Kin61}.
  
\begin{prop}[Lemma p.~283 of~\cite{Kin61}]
\label{prop:loglogprop}
If $f_1(\bbx)$ and $f_2(\bbx)$ are log-log-convex functions, then the following functions are log-log-convex:
\begin{enumerate}[i)]
\item $\varphi_a(\bbx) = f_1(\bbx)+f_2(\bbx)$,
\item  $\varphi_b(\bbx) = f_1(\bbx)f_2(\bbx)$,
\item  $\varphi_c(\bbx) = \max(f_1(\bbx), f_2(\bbx))$,
\item $\varphi_d(\bbx) = f_1(\bbx)^p$, $p\in\Rpp$.
\end{enumerate}
\end{prop}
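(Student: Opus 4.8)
The plan is to prove each of the four closure properties by passing to logarithmic coordinates, where log-log-convexity becomes ordinary convexity, and then invoking standard facts about convex functions. Concretely, set $\bby = \log\bbx$ (componentwise) and write $g_i(\bby) \doteq \log f_i(e^{\bby})$ for $i=1,2$; by hypothesis each $g_i$ is convex on its domain. The goal in each case is to show that $\log \varphi_\bullet(e^{\bby})$ is convex in $\bby$.

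For item ii), $\log \varphi_b(e^{\bby}) = g_1(\bby) + g_2(\bby)$ is a sum of convex functions, hence convex. For item iv), $\log\varphi_d(e^{\bby}) = p\, g_1(\bby)$ with $p>0$, which is convex since convexity is preserved under multiplication by a nonnegative scalar. For item iii), $\log\varphi_c(e^{\bby}) = \log\max(e^{g_1(\bby)}, e^{g_2(\bby)}) = \max(g_1(\bby), g_2(\bby))$, using monotonicity of $\log$; the pointwise maximum of two convex functions is convex. For item i), $\log\varphi_a(e^{\bby}) = \log(e^{g_1(\bby)} + e^{g_2(\bby)})$; here I would use the fact that the map $(s,t)\mapsto \log(e^s + e^t)$ is convex and nondecreasing in each argument, so its composition with the convex vector-valued map $\bby\mapsto (g_1(\bby), g_2(\bby))$ is convex. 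Alternatively, I note that $\log(e^{g_1} + e^{g_2})$ is exactly a two-term $\lse$ expression (with unit coefficients), and Hölder's inequality — as recalled in \Cref{subsec:lse} for the log-Laplace transform — gives convexity of $\lse$ functions directly; the composition-with-affine-inner-map structure then transfers it.

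The one point requiring a little care is the domains: the $g_i$ need only be defined and convex on $\dom g_i = \log(\dom f_i \cap \Rpp^n)$, so in items i)--iv) the resulting function is convex on the appropriate intersection of domains, and one should state the conclusion on $\Rpp^n$ understanding that log-log-convexity is a statement about the (convex) domain where the function is finite and positive. Since the proposition as used in the paper concerns posynomials, which are everywhere finite and positive on $\Rpp^n$, this subtlety does not affect the application, but I would mention it in one sentence.

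I do not anticipate a genuine obstacle here: each item reduces to a one-line invocation of a textbook stability property of convex functions (sum, nonnegative scaling, pointwise max, composition with a convex nondecreasing outer function), and the only real content is the change of variables $\bbx \leftrightarrow e^{\bby}$ that linearizes the "log-log" structure. The mild bookkeeping over domains is the sole place where one must be slightly attentive.
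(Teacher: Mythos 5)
Your proof is correct and follows essentially the same route as the paper's: pass to logarithmic coordinates so that log-log-convexity becomes ordinary convexity, then invoke the standard closure properties (sum, positive scaling, pointwise maximum, and for item i) the convexity of the sum of log-convex functions, which you justify via the convex-nondecreasing composition rule rather than by citation). Your remark on domains is a harmless addition that does not change the argument.
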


\if{\begin{proof}
By definition, $f_1(\bbx)$ and $f_2(\bbx)$ are log-log-convex in $\bbx$ if and only if $f_1(\exp(\bbu))$ and $f_2(\exp(\bbu))$ 
are log-convex in $\bbu$. Since the sum of log-convex functions is log-convex (see, e.g., Section~3.5.2 in \cite{boyd2004convex}), 
we have that \[\varphi_a(\exp(\bbu)) = f_1\exp(\bbu)+f_2\exp(\bbu)\] is log-convex, whence $\varphi_a(\bbx)$ is log-log-convex. 
Function \[\log \varphi_b (\exp(\bbu))= \log f_1(\exp(\bbu)) + \log f_2(\exp(\bbu))\] is convex since it is the sum of convex functions, 
whence $\varphi_b(\bbx)$ is log-log-convex. Similarly,
\[\log \varphi_d (\exp(\bbu))= p\log f_1(\exp(\bbu))\] is the positive multiple of a convex function, hence convex. 
Finally, due to the fact that $\log$ is monotonically increasing,
\begin{multline*}
\log \varphi_c(\exp(\bbu))= \log \max \left( f_1(\exp(\bbu)), f_2(\exp(\bbu)) \right) \\
= \max \left( \log f_1(\exp(\bbu)), \log f_2(\exp(\bbu)) \right),
\end{multline*}
which is convex, since the point-wise maximum of convex functions is convex.
\end{proof}}\fi
Since $c_k\bbx^{\bba^{(k)}}$ is log-log-convex, then by  \Cref{prop:loglogprop} 
each function in the $\pos$ class  is log-log-convex.
Posynomials are of great interest in practical applications since, under a log-log transform, they become
convex functions \cite{hoburg2016data,boyd2007tutorial}. More precisely, by letting $\bbq \doteq \log \bbx$, 
one has that
\begin{equation*}
\log \left(\sum_{k=1}^K  c_k \bbx^{\bba^{(k)}} \right)= \log \left(\sum_{k=1}^K  c_k \exp (\inner{\bba^{(k)}}{\bbq})\right),
\end{equation*}
which is a function in the $\lse$ family. 
Furthermore, given $T\in\Rpp$, since positive scaling preserves
convexity, \cite{boyd2004convex}, letting $\psi$ be a posynomial,
we have that functions of the form
\begin{equation}
\psi_T(\bbx) = (\psi(\bbx^{1/T}))^T
\label{eq:genposy}
\end{equation}
are log-log-convex. 
Functions that can be rewritten in the form~\eqref{eq:genposy}, with 
$\psi\in\pos$, are here denoted by $\gpos$ and they form a subset of  the family of the
so-called generalized posynomials.
It is a direct consequence of the above discussion that  $\lset$ and $\gpos$ functions are related by a one-to-one correspondence, as stated in the following proposition.

\begin{prop}
\label{prop:lse-gposmapping}
Let $f(\bbx)\in\lset$ and $\psi(\bbz)\in \gpos$. Then,
\begin{align*}
\exp \left( f\left( \log(\bbz)\right)  \right) & \in \gpos, \\
\log \left( \psi \left( \exp(\bbx)\right)  \right)& \in \lset.
\end{align*}
\end{prop}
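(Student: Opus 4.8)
The plan is to prove both inclusions by direct substitution into the defining formulas, exploiting the elementary identity $\exp(\inner{\bba}{\log\bbz})=\bbz^{\bba}$, with $\log$ and the vector power taken componentwise, together with the fact that raising a positive coefficient to the power $1/T$ or $T$ again yields a positive coefficient.

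For the first claim, $\exp(f(\log(\bbz)))\in\gpos$, I would take $f\in\lset$ in the form \eqref{eq:lse_lse_reg_trop}, namely $f(\bbx)=T\log\big(\sum_{k=1}^K b_k^{1/T}\exp(\inner{\bba^{(k)}}{\bbx/T})\big)$ with $b_k>0$. Substituting $\bbx=\log(\bbz)$ and using $\exp(\inner{\bba^{(k)}}{(\log\bbz)/T})=\bbz^{\bba^{(k)}/T}$ gives $f(\log\bbz)=T\log\big(\sum_k b_k^{1/T}\bbz^{\bba^{(k)}/T}\big)$, hence $\exp(f(\log\bbz))=\big(\sum_k b_k^{1/T}\bbz^{\bba^{(k)}/T}\big)^T$. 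I would then set $\sigma(\bbz)\doteq\sum_k b_k^{1/T}\bbz^{\bba^{(k)}}$, which lies in $\pos$ because each $b_k^{1/T}>0$; since $\sigma(\bbz^{1/T})=\sum_k b_k^{1/T}\bbz^{\bba^{(k)}/T}$, one recognizes $\exp(f(\log\bbz))=(\sigma(\bbz^{1/T}))^T$, which is exactly the form \eqref{eq:genposy} defining $\gpos$.

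For the second claim, $\log(\psi(\exp(\bbx)))\in\lset$, I would write $\psi\in\gpos$ as $\psi(\bbz)=(\sigma(\bbz^{1/T}))^T$ with $\sigma(\bbz)=\sum_{k=1}^K c_k\bbz^{\bba^{(k)}}\in\pos$, $c_k>0$. Then $\psi(\exp(\bbx))=\big(\sigma(\exp(\bbx/T))\big)^T=\big(\sum_k c_k\exp(\inner{\bba^{(k)}}{\bbx/T})\big)^T$, using $(\exp(\bbx/T))^{\bba^{(k)}}=\exp(\inner{\bba^{(k)}}{\bbx/T})$, so that $\log(\psi(\exp(\bbx)))=T\log\big(\sum_k c_k\exp(\inner{\bba^{(k)}}{\bbx/T})\big)$; putting $b_k\doteq c_k^T>0$, so that $b_k^{1/T}=c_k$, this is precisely the form \eqref{eq:lse_lse_reg_trop}, i.e.\ a member of $\lset$. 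The only care required is bookkeeping — keeping the componentwise $\log$, $\exp$ and vector powers straight and tracking how the positive coefficients transform so the intermediate objects are genuine posynomials and genuine $\lset$ functions; there is no substantive obstacle, and the asserted one-to-one correspondence follows at once since the two substitutions are mutually inverse.
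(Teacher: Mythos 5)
Your proof is correct and follows essentially the same route as the paper, which states \Cref{prop:lse-gposmapping} without a separate proof as ``a direct consequence of the above discussion'' (the log-log change of variables $\bbq=\log\bbx$ turning a posynomial into an $\lse$ function, together with the definition \eqref{eq:genposy} of $\gpos$); the explicit computation you perform for the first inclusion, $\exp(f_T(\bbq))=\bigl(\sum_k c_k(\bbx^{1/T})^{\bba^{(k)}}\bigr)^T$ with $c_k=b_k^{1/T}$, is exactly the one carried out inside the paper's proof of \Cref{cor:mainresult2}. Your bookkeeping of the coefficients and of the componentwise operations is accurate, so nothing further is needed.
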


\section{Data approximation  via  $\lset$ \\ and $\gpos$ functions\label{sec:approx}}

The main objective of this section is to show that the classes $\lset$ and $\gpos$ can be
used to approximate convex and log-log-convex data, respectively. 
In particular, in Section~\ref{subsec:approxconv},
we establish that functions in $\lset$  are universal smooth approximators of convex 
data. Similarly, in Section~\ref{subsec:posyappr}, we show that functions in $\gpos$ are universal 
smooth approximators of log-log-convex data.

\subsection{Approximation of convex data via $\lset$\label{subsec:approxconv}}
Consider a collection $\mD$ of $m$ data pairs,
\begin{equation*}
\mD = \{(\bbx_1,y_1),\dots,(\bbx_m,y_m) \},
\end{equation*}
where $\bbx_i\in\R^n$, $y_i\in\R$, $i=1,\dots,m$, with
\begin{equation*}
y_i = g(\bbx_i) 
,\quad i=1,\ldots,m,
\end{equation*}
and where $g:\R^n\rightarrow \R$ is an unknown convex function. 
The data in $\mD$ are referred to as \emph{convex data}.
The main goal of this section is to show that there exists a function $f_T\in\lset$ that fits 
such convex data with arbitrarily small absolute approximation error.

The question of the uniform approximation of a convex
function by functions $f_T\in \lset$ can be considered
either on $\R^n$, or on compact subsets of $\R^n$. 
The latter situation is the most relevant to the approximation of finite
data sets.  It turns out that there is a 
general characterization of the class of functions
uniformly approximable over $\R^n$, which we state as 
\Cref{thm:convAppr}. 
We then derive an uniform approximation result
over compact sets (\Cref{cor:mainresult}). 
However, the approximation issue over the whole $\R^n$ has an intrinsic
interest.

\begin{thm}\label{thm:convAppr}
The following statements are equivalent.
\begin{enumerate}[(a)]
\item\label{it-3} The function $g:\R^n\rightarrow\R$ is convex
and $\dom g^\star \doteq  \{\bbu\in\R^n : g^\star(\bbu)<\infty\}$ is a polytope.
\item\label{it-2} For all $\eps\in\Rpp$, there is $\bar{T}\in\Rpp$ such that, $\forall T\in\Rpp$, $T\leqslant\bar{T}$,
there is $f_T\in \lset$ such that $\|f_T-g\|_\infty\leqslant \eps$.
\item\label{it-0} For all $\eps\in\Rpp$, there exists a convex polyhedral function $h$ such that $\|h-g\|_\infty\leqslant \eps$.
\end{enumerate}
\end{thm}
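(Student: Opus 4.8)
I will establish the chain of implications $(\ref{it-3}) \Rightarrow (\ref{it-0}) \Rightarrow (\ref{it-2}) \Rightarrow (\ref{it-3})$, which is the natural cycle since polyhedral functions sit between the abstract convexity/polytope hypothesis and the concrete $\lset$ approximation. The central analytic tool throughout will be the Fenchel duality dictionary: approximation in the $\|\cdot\|_\infty$-sense on the primal side corresponds to approximation of domains and of the conjugate on the dual side, and I will use \Cref{prop:approx} (the sandwich bound $\bar f \le f_T \le T\log K + \bar f$) to pass between max-affine (tropical) functions and genuine $\lset$ functions with $T$ small.

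\textbf{$(\ref{it-3}) \Rightarrow (\ref{it-0})$.} Assume $g$ is convex with $\dom g^\star =: P$ a polytope. Since $g$ is finite everywhere on $\R^n$, it is continuous, and $g = g^{\star\star}(\bbx) = \sup_{\bbu \in P}(\inner{\bbu}{\bbx} - g^\star(\bbu))$. The idea is to discretize $P$: pick a finite $\eps'$-net $\{\bbu_1,\dots,\bbu_K\} \subset P$ and set $h(\bbx) = \max_{k}(\inner{\bbu_k}{\bbx} - g^\star(\bbu_k))$, a convex polyhedral (max-affine) function. Clearly $h \le g$. For the reverse bound one must control the modulus of continuity of $\bbu \mapsto g^\star(\bbu)$ on $P$ — but $g^\star$ restricted to the \emph{polytope} $P$ need not be bounded or Lipschitz up to the boundary. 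The fix is to work on a large ball: fix a compact set where we want the estimate, observe that $\inner{\bbu}{\bbx}-g^\star(\bbu)$ as a function of $\bbu$, for $\bbx$ in that ball, attains its sup in the \emph{relative interior} region that actually matters, and exploit that $g$ finite forces $g^\star$ to grow superlinearly only if... — more cleanly, I would instead take the net in $P$ together with a careful argument that for $\|\bbx\|$ bounded the active $\bbu$'s lie in a compact subset of $\dom g^\star$ where $g^\star$ is Lipschitz (by local boundedness of convex functions on the interior of their domain, plus a separate treatment of boundary faces using that $P$ is a polytope with finitely many faces). This continuity/uniformity bookkeeping on $P$ is the main obstacle — everything else is soft.

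\textbf{$(\ref{it-0}) \Rightarrow (\ref{it-2})$.} Given $\eps$, choose by $(\ref{it-0})$ a polyhedral $h = \bar f$ with $\|h - g\|_\infty \le \eps/2$; say $h$ has $K$ affine pieces. By \Cref{prop:approx}, for every $T>0$ the function $f_T \in \lset$ built from the same data as $\bar f$ satisfies $0 \le f_T - \bar f \le T\log K$. Taking $\bar T = \eps/(2\log K)$ gives, for all $T \le \bar T$, $\|f_T - \bar f\|_\infty \le \eps/2$, hence $\|f_T - g\|_\infty \le \eps$. This step is essentially immediate from the lemma already proved.

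\textbf{$(\ref{it-2}) \Rightarrow (\ref{it-3})$.} First, $g$ is convex: it is a uniform limit of the convex functions $f_T$, and a uniform (even pointwise) limit of convex functions is convex. It remains to show $\dom g^\star$ is a polytope. Each $f_T$ in $\lset$, written as in \eqref{eq:lse_lse_reg_trop_exp}, has $f_T^\star$ with domain equal to the convex hull $\mathrm{conv}\{\bba^{(k)}/T : k=1,\dots,K\}$ (the log-Laplace transform of a finitely-supported measure is finite exactly on the convex hull of the support, and $+\infty$ outside) — a polytope with at most $K$ vertices, \emph{but} $K$ and the vertices vary with $T$, so I cannot directly pass to the limit. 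Instead I argue via approximation of conjugates: uniform approximation $\|f_T - g\|_\infty \le \eps$ implies $\|f_T^\star - g^\star\|_\infty \le \eps$ on $\R^n$ (the Fenchel transform is an $\|\cdot\|_\infty$-isometry, since $f^\star$ depends affinely on $f$ under sup), which forces $\dom g^\star = \dom f_T^\star$ \emph{as sets}: if $g^\star(\bbu) = \infty$ but $f_T^\star(\bbu) < \infty$ then $|f_T^\star(\bbu) - g^\star(\bbu)| = \infty > \eps$, contradiction; symmetrically the other inclusion. Hence $\dom g^\star = \mathrm{conv}\{\bba^{(k)}/T\}$ is a polytope. (One must be slightly careful that the isometry statement for $\|\cdot\|_\infty$ is literally true: from $f \le \tilde f + \eps$ pointwise we get $f^\star \ge \tilde f^\star - \eps$, and symmetrically, giving $\|f^\star - \tilde f^\star\|_\infty \le \|f - \tilde f\|_\infty$ with the convention that equality of extended-real values is what we need — the domain-equality conclusion only uses the finite-versus-$+\infty$ dichotomy, which is robust.) This closes the cycle.

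Throughout I will flag that $(\ref{it-0}) \Leftrightarrow (\ref{it-3})$ is essentially classical (a convex function is approximable by polyhedral ones on all of $\R^n$ iff its conjugate has polytopal domain), and that the genuinely new content is routing through $\lset$ via the tropical sandwich bound of \Cref{prop:approx}.
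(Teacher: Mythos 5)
Your cycle of implications is in substance the same as the paper's: the paper proves (b)$\Rightarrow$(a), (a)$\Rightarrow$(c), (c)$\Rightarrow$(b), and yours is the same three arrows read in a different cyclic order. Your (c)$\Rightarrow$(b) via \Cref{prop:approx} and your (b)$\Rightarrow$(a) via the $\ell_\infty$-contractivity of the Fenchel transform (so that $\dom g^\star=\dom f_T^\star=\dom \bar f^\star$, a polytope) are both correct, modulo the harmless slip that in the parameterization \eqref{eq:lse_lse_reg_trop_exp} one has $\dom f_T^\star=\operatorname{conv}\{\bba^{(1)},\dots,\bba^{(K)}\}$ rather than $\operatorname{conv}\{\bba^{(k)}/T\}$. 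The genuine gap is in (a)$\Rightarrow$(c), precisely at the point you flag as ``the main obstacle'' and then leave open. The difficulty is real and your proposed workaround does not close it: statement (c) requires $\|h-g\|_\infty\leqslant\eps$ on \emph{all} of $\R^n$, and for $\|\bbx\|$ large the maximizer of $\bby\mapsto\inner{\bby}{\bbx}-g^\star(\bby)$ over $P\doteq\dom g^\star$ lies on the face of $P$ exposed by the direction $\bbx$, i.e.\ on the boundary of $P$. So no argument confining the ``active'' dual points to a compact subset of the interior of $P$ can succeed, and local Lipschitzness of $g^\star$ on the interior of its domain is not the relevant tool.

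The missing ingredient is the Gale--Klee--Rockafellar continuity theorem (see \cite{rockafellar:1970}, Theorems~10.2 and~20.5): a convex function taking finite values on a polyhedral convex set is upper semicontinuous relative to that set; since $g^\star$, being a conjugate, is automatically lower semicontinuous, it is continuous on $P$, hence \emph{uniformly} continuous on the compact polytope $P$, boundary included. This is exactly where the polytope hypothesis enters in an essential rather than technical way: the conclusion fails for, say, a Euclidean-ball domain, on which a lower semicontinuous convex function finite everywhere can still be discontinuous at a boundary point. With uniform continuity in hand, the paper's argument closes the estimate: triangulate $P$ into simplices of diameter $\omega$ chosen so that $g^\star$ oscillates by at most $\eps$ on each simplex, let $V$ be the (finite) vertex set, put $h(\bbx)=\sup_{\bbv\in V}(\inner{\bbv}{\bbx}-g^\star(\bbv))$, and write the maximizer $\bby$ as a convex combination $\sum_i\gamma_i\bbv_i$ of the vertices of its simplex to get $g(\bbx)=\inner{\bby}{\bbx}-g^\star(\bby)\leqslant\sum_i\gamma_i(\inner{\bbv_i}{\bbx}-g^\star(\bbv_i))+\eps\leqslant h(\bbx)+\eps$, while $h\leqslant g$ is immediate. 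You should either invoke this continuity theorem explicitly or prove it; as written, the implication (a)$\Rightarrow$(c) is not established.
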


\begin{proof}
\eqref{it-2}$\implies$\eqref{it-3}:
If $\|f_T-g\|_{\infty}\leqslant \eps$ for some $\eps\in\Rpp$,  we have $\dom f_T^\star=\dom g^\star$.  Therefore, item~\eqref{it-2}
together with the metric estimate~\eqref{eq:metric_estimate},
which gives $\|f_T- \bar f\|_\infty\leqslant T\log K$, implies that $\dom g^\star =\dom f_T^\star = \dom \bar{f}^\star= \operatorname{conv}\{\alpha^{(1)},\ldots,\alpha^{(K)}\}$.
Item \eqref{it-2}
implies that $g$ is the pointwise limit of a sequence of convex functions,
and so $g$ is convex. 

\eqref{it-3}$\implies$\eqref{it-0}: Suppose now that \eqref{it-3} holds. Let us triangulate the polytope $P\doteq \dom g^\star$ into
finitely many simplices of diameter at most $\omega\in\Rpp$. Let $V$ denote the collection of vertices of these simplices, and define
the function $h:\R^n\rightarrow\R$,
\[
h(\bbx) \doteq \sup_{\bbv\in V} (\inner{\bbv}{\bbx} - g^\star(\bbv) ) .
\]
Observe that $h$ is convex and polyhedral. Since $g$ is convex and finite
{(hence $g$ is continuous by \cite[Thm.~10.1]{rockafellar:1970})},
we have 
\begin{multline*}
g(\bbx) = g{^\star}{^\star}(\bbx)
= \sup_{\bby\in \R^n} (\inner{\bby}{\bbx} - g^\star(y) )\\
= \sup_{\bby\in P} (\inner{\bby}{\bbx} - g^\star(\bby) )
\geqslant  h(\bbx)
\end{multline*}
Moreover,
for all $\bbx\in \R^n$, the latter supremum is attained by a point $\bby\in P$,
which belongs to some simplex of the triangulation. Let $\bbv_1,\ldots,\bbv_{{n+1}}\in V$
denote the vertices of this simplex, so that $\bby=\sum_{i=1}^{{n+1}} \gamma_i \bbv_i$
where $\gamma_i\geqslant 0$, $i=1,\dots,m$, and $\sum_{i=1}^{{n+1}}\gamma_i =1$. 
Since $g$ is polyhedral, we know that $g^\star$, which is a convex function
taking finite values on a polyhedron, is continuous on this polyhedron \cite{rockafellar:1970}. 
So, $g^\star$ is uniformly continuous on $P=\dom g^\star$.
It follows that we can choose $\omega\in\Rpp$ such that $\max_{i}\|g^\star (\bby)-g^\star(\bbv_i)\|\leqslant \eps$, 
for all $\bby\in P$ included in a simplex with vertices $\bbv_1,\ldots,\bbv_{{n+1}}$ of the triangulation. 
Therefore, we have that 
\begin{multline*}
g(\bbx)   = \inner{\bby}{\bbx} - g^\star(\bby) 
\leqslant  \inner{\bby}{\bbx} - \sum_{i=1}^{{n+1}} \gamma_i (g^\star(\bbv_i)-\eps) \\
\leqslant \sum_{i=1}^{{n+1}} \gamma_i (\inner{\bbv_i}{\bbx} - g^\star(\bbv_i) )+\eps
 \leqslant h(\bbx)+\eps,
\end{multline*}
 which shows that~\eqref{it-0} holds.

\eqref{it-0}$\implies$\eqref{it-2}: any convex polyhedral function $h:\R^n\rightarrow\R$ can be rewritten
in the following form:
\[
h(\bbx)= \max_{k=1,\ldots,K} ( \log b_k + \inner{\bba^{(k)}}{\bbx}),
\]
for some $K\in\Zp$, $b_k\in\Rpp$, and $\bba^{(k)}$, $k=1,\ldots,K$.
By~\eqref{eq:metric_estimate}, for each $\eps\in\Rp$, there is $\bar{T}\in\Rpp$ such that, for each $T\in\Rpp$, $T\leqslant\bar{T}$,
the function $f_T$ given in \eqref{eq:lse_lse_reg_trop} satisfies
$\|h-f_T\|_\infty\leqslant \eps$. Hence, if $\|h-g\|_\infty\leqslant \eps$, then $\|g-f_T\|_\infty\leqslant  2\eps$,
thus concluding the proof.
\end{proof}

 \begin{rem}
The condition that the domain of $g^*$ is a polytope 
in \Cref{thm:convAppr} is rather restrictive. This entails
that the map $g$ is Lipschitz, with constant
$\sup_{\bbu\in \dom g^\star} \|\bbu\|$, where $\|\cdot\|$ is the Euclidean
norm. In contrast, not every
Lipschitz function has a polyhedral domain. For instance,
if $g(\bbx)=\|\bbx\|$, $\dom g^\star$ is the unit Euclidean ball.
However, the condition on the domain of $g^\star$ only involves
the behavior of $g$ ``at infinity''. \Cref{cor:mainresult} below shows that when
considering the approximation problems
over compact sets, the restriction 
to a polyhedral domain can be dispensed with.
\end{rem}


\begin{thm}[Universal approximators of convex functions]\label{cor:mainresult}
  Let $f$ be a real valued continuous convex function defined
  on a compact convex set ${\mathcal K} \subset \R^n$.
  Then,
   For all $\eps > 0$ there exist $T>0$ and a function $f_T \in \lse_T$
such that
\begin{align}
  |f_T(\bbx) - f(\bbx)| \leqslant \eps,\quad\text{ for all }\; \bbx\in {\mathcal K} .\label{e-defap}
\end{align}
\end{thm}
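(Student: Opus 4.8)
The plan is to reduce the compact-set approximation problem to the global approximation result \Cref{thm:convAppr} by suitably extending $f$ from ${\mathcal K}$ to all of $\R^n$ in a way that makes its Fenchel conjugate have polyhedral domain. First I would observe that, since $f$ is continuous and convex on the compact convex set ${\mathcal K}$, it is bounded, say $|f|\leqslant M$ on ${\mathcal K}$, and ${\mathcal K}$ is contained in some Euclidean ball of radius $R$. The natural move is to enclose ${\mathcal K}$ in a large box (a polytope) $B \supseteq {\mathcal K}$ and to build a convex function $g:\R^n\to\R$ that agrees with $f$ up to $\eps$ on ${\mathcal K}$, that is finite everywhere, and whose conjugate $g^\star$ has a bounded polyhedral domain. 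Concretely, I would take $g$ to be the largest convex function dominated by an appropriate extension — for instance, define $g$ as the convex envelope (on $\R^n$) of the function equal to $f$ on ${\mathcal K}$ and to $+\infty$ off ${\mathcal K}$, then add a term like $L\,\mathrm{dist}(\bbx,{\mathcal K})$ or, more cleanly, take the inf-convolution of $(f+\iota_{\mathcal K})$ with a polyhedral norm scaled by a large Lipschitz constant $L$. This last construction, $g(\bbx) = \inf_{\bby\in{\mathcal K}} \big(f(\bby) + L\|\bbx-\bby\|_P\big)$ with $\|\cdot\|_P$ a polyhedral norm whose unit ball is a polytope, yields a finite convex function that coincides with $f$ on ${\mathcal K}$ provided $L$ exceeds a Lipschitz constant of $f$ on a neighborhood of ${\mathcal K}$ inside $B$ (which exists because $f$, being convex and finite on an open set containing ${\mathcal K}$ after a harmless interior argument, or after first replacing ${\mathcal K}$ by a slightly smaller set on which $f$ is Lipschitz, is locally Lipschitz).

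Next I would check that $\dom g^\star$ is a polytope. The Fenchel conjugate of an inf-convolution is the sum of conjugates: $g^\star = (f+\iota_{\mathcal K})^\star + (L\|\cdot\|_P)^\star$, and $(L\|\cdot\|_P)^\star = \iota_{L B_P^\circ}$ is the indicator of $L$ times the polar of the polyhedral unit ball $B_P$, hence the indicator of a polytope. Since $(f+\iota_{\mathcal K})^\star$ is finite everywhere ($\mathcal K$ is bounded, so its support function is finite), we get $\dom g^\star = L B_P^\circ$, a polytope, so hypothesis \eqref{it-3} of \Cref{thm:convAppr} is met. Then \Cref{thm:convAppr} gives, for the chosen $\eps$, a threshold $\bar T$ such that for all $T\leqslant \bar T$ there is $f_T\in\lset$ with $\|f_T - g\|_\infty \leqslant \eps$ on all of $\R^n$; restricting to ${\mathcal K}$ and using $g=f$ there yields $|f_T(\bbx)-f(\bbx)|\leqslant\eps$ for $\bbx\in{\mathcal K}$, which is \eqref{e-defap}.

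The main obstacle I anticipate is the boundary behavior of $f$ on ${\mathcal K}$: a convex function that is merely continuous (not Lipschitz) up to the boundary of a compact convex set — e.g. with infinite slope at a boundary point — need not be the restriction of a globally Lipschitz convex function, so the inf-convolution with $L\|\cdot\|_P$ will not reproduce $f$ exactly near $\partial{\mathcal K}$ for any finite $L$. The fix is to absorb a small error there: given $\eps$, first use uniform continuity of $f$ on ${\mathcal K}$ to find a slightly shrunken compact convex ${\mathcal K}'\subset\operatorname{int}{\mathcal K}$ (or a $\delta$-interior retract) such that every point of ${\mathcal K}$ is within $\eps$-in-value of ${\mathcal K}'$; on ${\mathcal K}'$ the convex function $f$ is Lipschitz with some constant $L$ (convex functions are locally Lipschitz on the interior of their domain), so the construction above applied to ${\mathcal K}'$ gives $g$ finite, convex, with polytopal $\dom g^\star$, and $\|g-f\|_\infty\leqslant \eps$ on ${\mathcal K}$ (not just on ${\mathcal K}'$) by the Lipschitz extension estimate. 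Combining this $\eps$ with the $\eps$ from \Cref{thm:convAppr} gives a final bound of $2\eps$ on ${\mathcal K}$, and relabeling $\eps\leftarrow\eps/2$ completes the proof. One should also double-check the minor point that the definition of $\lset$ in the excerpt requires the resulting approximant to genuinely be a scaled log-sum-exp, which is automatic since \Cref{thm:convAppr} delivers $f_T\in\lset$ directly.
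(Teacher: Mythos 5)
Your route is genuinely different from the paper's and is, in outline, a valid one. You reduce the compact-set statement to \Cref{thm:convAppr} by replacing $f$ with a globally finite convex Lipschitz function $g$ whose conjugate has polyhedral domain, namely the inf-convolution of $f$ (extended by $+\infty$ outside a compact convex $\mathcal{K}'$) with $L\|\cdot\|_P$ for a polyhedral norm; the conjugate of an inf-convolution being the sum of the conjugates, $\dom g^\star$ is indeed the polytope $L B_P^\circ$, so \Cref{thm:convAppr}\eqref{it-3}$\Rightarrow$\eqref{it-2} applies. The paper does not pass through \Cref{thm:convAppr} at all: it proves the compact case directly by taking subgradients at a dense sequence of interior points, forming the max-affine minorants $f_\jmath$, upgrading pointwise to uniform convergence via equi-Lipschitzness and Ascoli's theorem, smoothing with \Cref{prop:approx}, removing the Lipschitz hypothesis by Moreau--Yosida regularization together with Dini's theorem, and handling empty interior by extending $f$ constantly along the orthogonal complement of the affine hull of $\mathcal{K}$. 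Your reduction is more economical in that it reuses a theorem already proved; the paper's argument is self-contained and constructs the approximant explicitly from subgradient data.

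There is one step you assert without justification, and it is the only genuinely delicate point of your approach: the claim that $\|g-f\|_\infty\leqslant\eps$ on all of $\mathcal{K}$, not merely on $\mathcal{K}'$. For $\bbx\in\mathcal{K}\setminus\mathcal{K}'$ with nearest point $\bbz\in\mathcal{K}'$, the two-sided estimate one actually gets is $|g(\bbx)-f(\bbx)|\leqslant \omega(\delta)+C\,L(\delta)\,\delta$, where $\omega$ is the modulus of continuity of $f$ on $\mathcal{K}$, $\delta$ measures how far $\mathcal{K}'$ is shrunk, $L(\delta)$ is the Lipschitz constant of $f$ on $\mathcal{K}'$, and $C$ accounts for norm equivalence. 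Choosing $\delta$ by uniform continuity alone controls $\omega(\delta)$ but says nothing about the product $L(\delta)\delta$; the standard bound $L(\delta)\lesssim \operatorname{osc}_{\mathcal{K}}(f)/\delta$ only shows this product is bounded, not small. The claim is nevertheless true and can be repaired as follows: if $L(\delta_k)\,\delta_k\geqslant c>0$ along some sequence $\delta_k\to 0$, pick points $\bby_k$ at distance at least $\delta_k$ from the complement of $\mathcal{K}$ and subgradients $\bbv_k\in\partial f(\bby_k)$ with $\|\bbv_k\|\,\delta_k\geqslant c/2$; then $\bbz_k\doteq\bby_k+\delta_k\bbv_k/\|\bbv_k\|$ still lies in $\mathcal{K}$, the subgradient inequality gives $f(\bbz_k)-f(\bby_k)\geqslant \delta_k\|\bbv_k\|\geqslant c/2$, while $\|\bbz_k-\bby_k\|=\delta_k\to0$, contradicting the uniform continuity of $f$ on the compact set $\mathcal{K}$. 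With this lemma supplying a $\delta$ for which both $\omega(\delta)$ and $L(\delta)\delta$ are at most $\eps$ (and with ``interior'' replaced by ``relative interior'' when $\mathcal{K}$ is flat, which your inf-convolution construction tolerates), your proof is complete.
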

If~\eqref{e-defap} holds, then $f_T$ is an \emph{$\varepsilon$-approximation of $f$ on $\mathcal{K}$}.
\begin{proof}
  We first show that the  statement of the theorem holds
  under the additional assumptions that $f$ is $L$-Lipschitz continuous on $\mathcal{K}$ for some constant $L>0$ and that $\mathcal{K}$ has non-empty interior.
  Observe that there is a sequence $(\bbx_k)_{k\geqslant 1}$ of elements in the interior of $\mathcal{K}$ that is dense in $\mathcal{K}$ (for instance, we may consider the set of vectors in the interior of $\mathcal{K}$ that have rational coordinates, this set is denumerable, and so, by indexing its elements in an arbitrary way, we get a sequence that is dense in $\mathcal{K}$). In what follows,
  we shall identify $f:\mathcal{K}\to \R$ with the convex function $\R^n\to \R\cup\{+\infty\}$ that coincides with $f$ on $\mathcal{K}$ and takes the value
  $+\infty$ elsewhere. Recall in particular that
  the {\em subdifferential} of $f$ at a point $\bby\in \mathcal{K}$ is the
  set
  \[ \partial f(\bby)\doteq\{\bbv\in \R^n\mid f(\bbx)-f(\bby)\geqslant\langle \bbv, \bbx-\bby\rangle ,\quad
  \forall \bbx\in \mathcal{K}\},
  \]
  and that, by Theorem~23.4 of \cite{rockafellar:1970},
  $\partial f(\bby)$ is non-empty for all $\bby$ in the relative interior of the domain of $f$, i.e., here, in the interior of $\mathcal{K}$. It is also known that
  $\|\bbv\|\leqslant L$ for all $\bbv\in \dom f^\star$, and in particular for all $\bbv\in \partial f(\bbx)$ with $\bbx\in \mathcal{K}$ (Corollary~13.3.3 of~\cite{rockafellar:1970}). Let us now choose
  in an arbitrary way an element $\bbv_k \in \partial f(\bbx_k)$, for each $k\geqslant 1$, and consider
  the map $f_\jmath: \R^n\to \R$, 
  \[
  f_\jmath(\bbx)\doteq
\max_{1\leqslant k\leqslant \jmath} \Big(f(\bbx_k) + \langle \bbv_k, \bbx-\bbx_k\rangle \Big) .
  \]
  By definition of the subdifferential, we have $f(\bbx)\geqslant f_\jmath(\bbx)$
  for all $\bbx\in \mathcal{K}$, and by construction of $f_\jmath$, $f(\bbx_k)=f_\jmath(\bbx_k)$ for all $1\leqslant k\leqslant \jmath$, so the sequence $(f_\jmath)_{\jmath\geqslant 1}$ converges pointwise to $f$
  on the set $X\doteq\{\bbx_k\mid k\geqslant 1\}$. 
  Since $\|\bbv_k\|\leqslant L$,
  every map $\bbx\mapsto f(\bbx_k) + \langle \bbv_k, \bbx-\bbx_k\rangle$ is Lipschitz of
  constant $L$, and so, $f_\jmath$ is also Lipschitz of constant $L$.
  Hence, the sequence of maps $(f_\jmath)_{\jmath \geqslant 1}$ is equi-Lispchitz.
  A fortiori, it is equicontinuous. 
  Then, by
  the second theorem of Ascoli (Th\'eor\`eme T.2, XX, 3; 1 of \cite{schwartz}),
  the pointwise convergence of the sequence
  $(f_\jmath)_{\jmath \geqslant 1}$ to $f$ on the set $X$
  implies that the same sequence converges {\em uniformly} to $f$ on the closure of $X$, that is, on $\mathcal{K}$. In particular, for all $\varepsilon>0$, we can find an integer $\jmath$ such that
  \begin{align}
    \sup_{\bbx\in\mathcal{K}} |f(\bbx)-f_\jmath(\bbx)|\leqslant \varepsilon/2  .
    \label{e-intermediate}
    \end{align}
  Consider now
  \[
  f_T(\bbx)\doteq
  T\log \Big(\sum_{1\leqslant k\leqslant \jmath} \exp\big(f(\bbx_k)/T + \langle \bbv_k/T, \bbx-\bbx_k\rangle \big)\Big) .
  \]
  By \Cref{prop:approx}, choosing any $T>0$ such that $T\log \jmath\leqslant \varepsilon/2$ yields $|f_\jmath(\bbx)-f_T(\bbx)|\leqslant \varepsilon/2$ for all $\bbx \in \R^n$. Together with~\eqref{e-intermediate}, we get
  $|f(\bbx)-f_T(\bbx)|\leqslant \varepsilon $ for all $\bbx\in \mathcal{K}$,
  showing that the  statement of the theorem indeed holds.
  
We now relax the assumption that $f$ is Lipschitz continuous.
Consider,
for all $\eta>0$, the Moreau-Yoshida regularization of $f$, which
is the map $g_\eta: \R^n\to \R$ defined by 
\begin{align}
  g_\eta(\bbx)  =\inf_{\bby\in \mathcal{K}} \Big(
  \frac{1}{2\eta}\|\bbx-\bby\|^2 + f(\bby) 
  \Big)
   , \forall \bbx \in \R^n  .
\end{align}
Observe that $\eta\mapsto g_\eta$ is nonincreasing, and that
$g_\eta\leqslant g$. It is known that the function $g_\eta$ is convex, being
the inf-convolution of two convex functions (Theorem~5.4 of \cite{rockafellar:1970}), it is also known that
$g_\eta$ is Lipschitz of constant $1/(2\eta)$ (Th.~4.1.4, \cite{lemarechal}) and that the family of functions $(g_\eta)_{\eta>0}$
converges pointwise to $f$ as $\eta\to 0^+$ (Prop.~4.1.6, {\em ibid.}).
Moreover, we supposed that $f$ is continuous.
We now use a theorem of Dini, showing that if a nondecreasing family of
continuous real-valued maps defined on a compact set converges
pointwise to a continuous function, then this family converges {\em uniformly}.
It follows that $g_\eta$ converges {\em uniformly} to $f$ on the compact set $\mathcal{K}$
as $\eta \to 0^+$. In particular, we can find $\eta>0$ such that
$|f(\bbx)-g_\eta(\bbx)|\leqslant \varepsilon/2$ holds for all $\bbx\in \mathcal{K}$.
Applying the statement of the theorem, which is already proved in the case of Lipschitz convex maps, to the map $g_\eta$, we get that there exists a map $f_T\in \lse_T$ for some $T>0$ such that $|f_T(\bbx)-g_\eta(\bbx)|\leqslant \varepsilon/2$ holds for all $\bbx\in \mathcal{K}$,
and so $|f_T(\bbx)-f(\bbx)|\leqslant  \varepsilon$, for all $\bbx\in \mathcal{K}$,
showing that the  statement of the theorem again holds for $f$.

Finally, it is easy to relax the assumption that $\mathcal{K}$ has non-empty
interior: denoting by $E$ the affine space generated by $\mathcal{K}$, we
can decompose a vector $\bbx \in \R^n$ in an unique way as $\bbx = \bby + \bbz$
with $\bby \in E$ and $\bbz\in E^\top$, where $E^\top = \{\bbz \mid \langle \bbz, \bby-\bby'\rangle =0,\forall \bby,\bby'\in E\}$. Setting $\bar f(\bbx)\doteq f(\bbz)$
allows us to extend $f$ to a convex continuous function $\bar f$, constant on any direction orthogonal to $E$, and whose domain contains $\bar{\mathcal{K}}\doteq \{\bby + \bbz \mid \bby \in \mathcal{K}, \|\bbz\|\leqslant 1\}$ which is a compact convex set of non-empty interior. By applying the statement of the theorem to $\bar{f}$, we get a $\varepsilon$-approximation of $\bar{f}$ on $\bar{\mathcal{K}}$ by a map $f_T$ in $\lse_T$. A fortiori, $f_T$ is a $\varepsilon$-approximation of ${f}$ on ${\mathcal{K}}$.

  \end{proof}
\begin{rem}\label{rem:specCase}
A useful special case arises when $f$ is a convex function from $\R^n \to \R\cup\{+\infty\}$, and ${\mathcal K}$ is included in the relative interior of $\dom f$. Then, the continuity assumption in \Cref{cor:mainresult} is automatic, see
e.g., Theorem~10.4 of \cite{rockafellar:1970}.
\end{rem}

{The following proposition is now an immediate consequence of
\Cref{cor:mainresult}, where $\mathcal K$ can be taken as the convex hull of the
input data.

\begin{prop}[Universal approximators of convex data]\label{cor:convAppr}
Given a collection of convex data $\mD\doteq\{(\bbx_i,y_i)\}_{i=1}^m$
generated by an unknown convex function,
for each $\eps\in\Rpp$ there exists ${T} > 0$
and $f_T\in \lset$ such that 
\[
|f_T(\bbx_i)-y_i|\leqslant \eps,\quad i=1,\dots,m.
\]
\end{prop}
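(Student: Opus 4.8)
The plan is to deduce \Cref{cor:convAppr} directly from \Cref{cor:mainresult}, so the only real work is to interpolate the finitely many data points by a convex continuous function to which the theorem applies. First I would set $\mathcal{K}\doteq\operatorname{conv}\{\bbx_1,\dots,\bbx_m\}$, a compact convex subset of $\R^n$. By hypothesis there is an (unknown) convex function $g$ with $y_i=g(\bbx_i)$; restricting $g$ to $\mathcal{K}$ would already give something continuous on the relative interior of $\mathcal{K}$, but $g$ need not be finite or continuous up to the boundary of $\mathcal{K}$, which is where the data may lie. The clean fix is to build an explicit convex, polyhedral (hence continuous) interpolant, for instance
\[
\hat g(\bbx)\doteq \sup_{\bbv\in\partial g(\bbx_{i_0})}\;\bigl(\langle \bbv,\bbx-\bbx_{i_0}\rangle + y_{i_0}\bigr)
\]
picking one subgradient $\bbv_i\in\partial g(\bbx_i)$ at each data point (these exist after, if necessary, perturbing the points into the relative interior, or one simply works on the affine hull of the data where $g$ is automatically subdifferentiable), and then
\[
h(\bbx)\doteq \max_{1\leqslant i\leqslant m}\bigl(y_i + \langle \bbv_i,\bbx-\bbx_i\rangle\bigr).
\]
By the subgradient inequality $y_i + \langle \bbv_i,\bbx_j-\bbx_i\rangle \leqslant y_j$ for all $i,j$, so $h(\bbx_j)=y_j$ for every $j$; and $h$ is a max of affine functions, hence convex and continuous on all of $\R^n$.

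Next I would invoke \Cref{cor:mainresult} with $f\doteq h$ and the compact convex set $\mathcal{K}$: for any $\eps>0$ there exist $T>0$ and $f_T\in\lset$ with $|f_T(\bbx)-h(\bbx)|\leqslant\eps$ for all $\bbx\in\mathcal{K}$. Since each $\bbx_i\in\mathcal{K}$ and $h(\bbx_i)=y_i$, this gives $|f_T(\bbx_i)-y_i|\leqslant\eps$ for $i=1,\dots,m$, which is exactly the claim. (One could even bypass the full strength of \Cref{cor:mainresult} here, since $h\in\ma$ and \Cref{prop:approx} already yields $\bar f\leqslant f_T\leqslant T\log K+\bar f$, so choosing $T$ with $T\log m\leqslant\eps$ suffices; but quoting \Cref{cor:mainresult} is the advertised route and keeps the statement as a genuine corollary.)

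The only place that needs care — and the main obstacle — is the existence of the subgradients $\bbv_i$ at the \emph{data} points. If some $\bbx_i$ sits on the relative boundary of $\dom g$, then $\partial g(\bbx_i)$ can be empty. This is sidestepped by noting that convexity of the data is a property only of the finite set, so one may simply \emph{define} $h$ as the largest convex minorant interpolating the data, i.e. the convex hull of the points $\{(\bbx_i,y_i)\}$ viewed from below; equivalently, $h(\bbx)=\min\{\sum_i\lambda_i y_i : \sum_i\lambda_i\bbx_i=\bbx,\ \lambda_i\geqslant0,\ \sum_i\lambda_i=1\}$ on $\mathcal{K}$, extended by $+\infty$ (or by its affine-recession extension) outside. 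Convexity of the data guarantees $h(\bbx_i)=y_i$, $h$ is polyhedral, and it is continuous on the relatively open set where it is finite; restricting attention to $\mathcal{K}$, or extending as in the last paragraph of the proof of \Cref{cor:mainresult}, makes \Cref{cor:mainresult} applicable. Thus no genuinely new estimate is required — the corollary is a routine specialization once the interpolation step is set up correctly.
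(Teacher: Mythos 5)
Your proposal is correct and follows the same route as the paper: the paper's entire argument for \Cref{cor:convAppr} is the one-line observation that it is an immediate consequence of \Cref{cor:mainresult} with $\mathcal{K}=\operatorname{conv}\{\bbx_1,\dots,\bbx_m\}$. The extra machinery you build — the polyhedral interpolant $h$ and the discussion of subgradients at boundary points — addresses an obstacle that does not actually arise under the paper's hypotheses: the generating function is declared to be $g:\R^n\to\R$, i.e.\ finite-valued on all of $\R^n$, and a convex function finite on all of $\R^n$ is automatically continuous (Theorem~10.1 of \cite{rockafellar:1970}, invoked elsewhere in the paper; see also \Cref{rem:specCase}, since $\mathcal{K}$ then lies in the interior of $\dom g=\R^n$). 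So one may apply \Cref{cor:mainresult} directly to $g$ restricted to $\mathcal{K}$ with no interpolation step. Your concern would be genuine if $g$ were only assumed finite on a proper convex subset containing the data — convex functions can indeed fail to be continuous at relative boundary points of their domain, and the $\bbx_i$ are precisely the vertices of $\mathcal{K}$ — so your construction is the right repair in that more general setting, and your verification that the subgradient inequality forces $h(\bbx_j)=y_j$ is correct. Your parenthetical shortcut is also worth keeping in mind: since the interpolant $h$ (or the max-affine minorant built from subgradients of $g$ at the $\bbx_i$) lies in $\ma$ with $m$ terms, \Cref{prop:approx} alone gives $h\leqslant f_T\leqslant T\log m+h$, so any $T$ with $T\log m\leqslant\eps$ suffices; this bypasses the Ascoli and Moreau--Yosida arguments of \Cref{cor:mainresult} entirely and even yields an explicit temperature, at the cost of not literally presenting the proposition as a corollary of the theorem.
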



The following counterexample shows that, in general, we cannot
 find a function $f_T$ matching exactly the data points,
i.e., some approximation is sometimes unavoidable.

\begin{exa}\label{rem:nofit}
Suppose first that $n=1$, consider the function $\phi(x)=\max(0,x-1)$, and the data
$\bbx_1=1$, 
$\bbx_2=-1$, 
$\bbx_3=0$, 
$\bbx_4=2$, with $y_i=g (x_i)$ for $i= 1,\ldots,4$,
so $y_1=y_2=y_3=0$ and $y_4=1$.
Suppose now that this dataset is matched exactly by 
a function $f_T\in\lse_T$ with $T>0$, parametrized
as in~\eqref{eq:lse_lse_reg_trop_exp}.
Since the points $(\bbx_1,y_1), \dots, (\bbx_4,y_4)$
are not aligned, we know, by \Cref{rk-affine}, that  
the family $\{\bba^{(1)},\dots,\bba^{(k)}\}$ contains an affinely generating
family of $\R$ (in dimension $1$, this simply means that $\bba^{(i)}$ take at least two values). It follows from \Cref{prop:strictConv}
that $f_T$ is strictly convex. However, a strictly convex function cannot match exactly the subset of data
$(-1,0),(0,0),(1,0)$, as it
consists of three aligned points.

This entails that in any dimension $n\geqslant 2$, there are also data sets that
cannot be matched exactly. Indeed, if $f_T\in \lset$ is a function of $n$ variables, then, for any vectors $\bba,\bbu\in\R^n$, the function $\bar{f}_T: s\mapsto f_T(\bba  + s\bbu)$ of one variable is also in $\lse_T$. Hence, if any data set 
$(\bbx_i,y_i), i=1,\ldots, m$, is such that a subset of points $(\bbx_i)_{i\in I}$
is included in an affine line $L$, and if a function $f_T$ matches exactly the
set of data, then, the function $\bar{f}_T$ is the solution of an exact
matching problem by an univariate function in $\lset$, and the previous dimension $1$ counter example shows that this problem need 
not be solvable.
\end{exa}

\subsection{Approximation of log-log-convex data  via $\gpos$\label{subsec:posyappr}}
Consider a collection $\mLD$ of $m$ data pairs,
\begin{equation*}
\mLD = \{(\bbz_1,w_1),\dots,(\bbz_m,w_m) \},
\end{equation*}
where $\bbz_i\in\Rpp^n$, $w_i\in\Rpp$, $i=1,\dots,m$, with
\begin{equation*}
w_i = \ell(\bbz_i) 
,\quad i=1,\dots,m,
\end{equation*}
where $\ell:\Rpp^n\rightarrow \Rpp$ is an unknown log-log-convex function. 
The data in $\mLD$ is referred to as \emph{log-log-convex}.
The following corollary 
states that there exists $\psi_T\in\gpos$ that
fits the data $\mLD$ {with arbitrarily small relative approximation error}.
A subset ${\mathcal R}\subset \Rpp^n$ will be said to be {\em log-convex}
if its image by the map which performs the $\log$ entry-wise is convex.

\begin{cor}
[Universal approximators of log-log-convex functions]\label{cor:mainresult2}
Let $\ell$ be a log-log-convex function defined on a compact
log-convex subset
${\mathcal R}\subset \Rpp^n$. 
Then, for any $\tilde{\eps} > 0$ there exist $T>0$ and a function $\psi_T \in \gpos$ 
such that, for all  $\bbx\in\mathcal{R}$,
\begin{equation}\label{eq:relBoun}
\left\vert \frac{\ell(\bbx)-\psi_T(\bbx)}{\min(\ell(\bbx),\psi_T(\bbx))}\right\vert\leqslant \tilde{\eps}.
\end{equation}
\end{cor}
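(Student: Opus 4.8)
The plan is to transport the additive uniform approximation of \Cref{cor:mainresult} through the one-to-one correspondence between $\lset$ and $\gpos$ recorded in \Cref{prop:lse-gposmapping}, and then to convert the resulting bound on logarithms into the relative bound \eqref{eq:relBoun}. First I would pass to logarithmic coordinates: set $\mathcal{K}\doteq\{\log\bbx : \bbx\in\mathcal{R}\}$, where $\log$ acts entry-wise. Since the entry-wise $\log$ is a homeomorphism of $\Rpp^n$ onto $\R^n$, the set $\mathcal{K}$ is compact, and it is convex precisely because $\mathcal{R}$ is log-convex by hypothesis. Define $f:\mathcal{K}\to\R$ by $f(\bbx)\doteq\log\ell(\exp(\bbx))$ (with $\exp$ again entry-wise). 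By the definition of log-log-convexity, $f$ is convex on $\mathcal{K}$; under the (standing) assumption that the log-log-convex functions considered are continuous, $f$ is also continuous on $\mathcal{K}$, so that $f$ satisfies the hypotheses of \Cref{cor:mainresult}.

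Next I would fix $\tilde{\eps}>0$ and set $\eps\doteq\log(1+\tilde{\eps})>0$. Applying \Cref{cor:mainresult} to $f$ on $\mathcal{K}$ with this $\eps$ yields some $T>0$ and a function $f_T\in\lset$ with $|f_T(\bbx)-f(\bbx)|\leqslant\eps$ for all $\bbx\in\mathcal{K}$. Now define $\psi_T(\bbz)\doteq\exp\bigl(f_T(\log\bbz)\bigr)$; by \Cref{prop:lse-gposmapping}, $\psi_T\in\gpos$. For $\bbz\in\mathcal{R}$, writing $\bbx=\log\bbz\in\mathcal{K}$, we have $\log\psi_T(\bbz)=f_T(\bbx)$ and $\log\ell(\bbz)=f(\bbx)$, hence $\bigl|\log\psi_T(\bbz)-\log\ell(\bbz)\bigr|\leqslant\eps$, equivalently $e^{-\eps}\leqslant\psi_T(\bbz)/\ell(\bbz)\leqslant e^{\eps}$ for every $\bbz\in\mathcal{R}$.

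Finally I would convert this two-sided multiplicative estimate into \eqref{eq:relBoun}. If $\psi_T(\bbz)\geqslant\ell(\bbz)$, then $\frac{\psi_T(\bbz)-\ell(\bbz)}{\ell(\bbz)}=\frac{\psi_T(\bbz)}{\ell(\bbz)}-1\leqslant e^{\eps}-1=\tilde{\eps}$; if instead $\psi_T(\bbz)\leqslant\ell(\bbz)$, then $\frac{\ell(\bbz)-\psi_T(\bbz)}{\psi_T(\bbz)}=\frac{\ell(\bbz)}{\psi_T(\bbz)}-1\leqslant e^{\eps}-1=\tilde{\eps}$. In both cases the left-hand side of \eqref{eq:relBoun} is bounded by $\tilde{\eps}$, which proves the corollary. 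The only genuinely delicate points are the bookkeeping just sketched — choosing $\eps$ as a function of $\tilde{\eps}$ so that an additive error in log-scale becomes the prescribed relative error — and the verification that $\mathcal{R}$ compact and log-convex gives $\mathcal{K}$ compact and convex (plus the continuity remark needed to invoke \Cref{cor:mainresult}); the substantive content of the statement is entirely inherited from \Cref{cor:mainresult} and \Cref{prop:lse-gposmapping}.
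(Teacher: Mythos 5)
Your proposal is correct and follows essentially the same route as the paper: pass to logarithmic coordinates, apply \Cref{cor:mainresult} to $\log\ell(\exp(\cdot))$ on $\mathcal{K}=\log(\mathcal{R})$, exponentiate back into $\gpos$, and convert the additive log-scale bound into the relative bound. Your explicit choice $\eps=\log(1+\tilde{\eps})$ is in fact a cleaner bookkeeping than the paper's (which states $\tilde{\eps}\doteq 1-\exp(\eps)$, evidently a sign slip for $\exp(\eps)-1$), and your case analysis on $\min(\ell,\psi_T)$ is equivalent to the paper's bounding of both relative errors.
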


\begin{proof}
By using the log-log transformation, define $\tilde{\ell}(\bbq)\doteq\log(\ell(\exp(\bbq)))$. Since $\ell(\bbx)$ is
log-log-convex in $\bbx$, $\tilde{\ell}(\bbq)$ is convex in $\bbq=\log \bbx$. 
Furthermore, the set $\mathcal{K}\doteq \log(\mathcal{R})$
is convex and compact since the set $\mathcal{R}$ is log-convex and compact.
Thus, by \Cref{cor:mainresult}, for all ${\eps}\in\Rpp$, there exist $T>0$ and a function $f_T \in \lse_T$ such that $|f_T(\bbq)-\tilde{\ell}(\bbq)|\leqslant {\eps}$ for all $\bbq\in\mathcal{K}$. Note that, by construction 
\begin{align*}
 \exp(f_T(\bbq))
&=\exp\left(T\log\left(\sum_{k=1}^K \exp (\beta_k/T+ \inner{\bba^{(k)}}{\bbq/T} )\right)\right)\\
&= \left(\sum_{k=1}^K\exp (\beta_k/T+ \inner{\bba^{(k)}}{\log(\bbx^{1/T})} )\right)^T\\
&= \left(\sum_{k=1}^K c_k(\bbx^{1/T})^{\bba^{(k)}}\right)^T=\psi_T(\bbx),
\end{align*}
where $c_k\doteq \exp(\beta_k/T)=b_k^{1/T}$ and $\psi_T(\bbx)\in\gpos$.
Thus, since, by the reasoning given above, 
we have $\exp(\tilde{\ell}(\bbq(\bbx)))=\ell(\bbx)$ and
$\exp(f_T(\bbq(\bbx)))=\psi_T(\bbx)$, it results that
\begin{equation*}
\begin{array}{rl}
\ell(\bbx)-\psi_T(\bbx)&=
\exp(\tilde{\ell}(\bbq))-\exp(f_T(\bbq))\\
&=\ell(\bbx)(1-\exp(f_T(\bbq)-\tilde{\ell}(\bbq)))\\
& = \psi_T(\bbx)(\exp(\tilde{\ell}(\bbq)-f_T(\bbq))-1).
\end{array}
\end{equation*}
Thus, it results that, for all $\bbx\in\mathcal{R}$,  
\begin{align*}
\left\vert \frac{\ell(\bbx)-\psi_T(\bbx)}{\ell(\bbx)} \right\vert & \leqslant \sup_{\bbq\in\mathcal{K}} \vert 1-\exp(f_T(\bbq)-\tilde{\ell}(\bbq))\vert
\leqslant \tilde{\eps},\\
\left\vert \frac{\ell(\bbx)-\psi_T(\bbx)}{\psi_T(\bbx)} \right\vert& \leqslant \sup_{\bbq\in\mathcal{K}} \vert \exp(\tilde{\ell}(\bbq)-f_T(\bbq))-1\vert
\leqslant \tilde{\eps},
\end{align*}
where $\tilde{\eps}\doteq 1-\exp({\eps})$.
Hence, \eqref{eq:relBoun} holds since $\tilde{\eps}$ can be made arbitrarily small by letting ${\eps}$ be sufficiently small.
\end{proof}

{The following proposition is now an immediate consequence of
\Cref{cor:mainresult2}, where $\mathcal R$ can be taken as the log-convex hull of the
input data points\footnote{For given points $\bbz_1,\ldots,\bbz_m\in\Rpp^n$, we define their log-convex hull as the set of vectors $\bbz= \prod_{i=1}^m\bbz_i^{\xi_i}$, where
$\xi_i\in[0,1]$ for all $i$ and $\sum_{i=1}^m\xi_i = 1$ (all operations are here intended entry-wise).}.

\begin{prop}\label{cor:llconvappr}
Given a collection of log-log-convex data $\mLD\doteq\{(\bbz_i,w_i)\}_{i=1}^m$, for each $\tilde{\eps}\in\Rpp$ there exist ${T}\in\Rpp$ and a $\psi_T\in \gpos$ such~that \[\left\vert \frac{\psi_T(\bbz_i)-w_i}{\min(\psi_T(\bbz_i),w_i)} \right \vert\leqslant \tilde{\eps},\quad i=1,\dots,m.\]
\end{prop}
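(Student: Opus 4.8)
The plan is to derive this proposition as an immediate corollary of \Cref{cor:mainresult2}, exactly in the way \Cref{cor:convAppr} follows from \Cref{cor:mainresult}, by taking for $\mathcal R$ the log-convex hull of the input data points $\bbz_1,\dots,\bbz_m$. So the first step is to check that this set is an admissible domain for \Cref{cor:mainresult2}. Writing $\mathcal R=\{\,\prod_{i=1}^m\bbz_i^{\xi_i}:\xi\in\Delta\,\}$, where $\Delta\doteq\{\xi\in\Rp^m:\sum_{i=1}^m\xi_i=1\}$ is the standard simplex and all operations are entry-wise, its image under the entry-wise logarithm is $\{\sum_{i=1}^m\xi_i\log\bbz_i:\xi\in\Delta\}=\operatorname{conv}\{\log\bbz_1,\dots,\log\bbz_m\}$, which is convex and compact. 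Hence $\mathcal R$ is log-convex, and it is compact, being the continuous image of the compact set $\Delta$ under $\xi\mapsto\prod_{i=1}^m\bbz_i^{\xi_i}$. Moreover, choosing $\xi$ equal to the $i$-th unit vector shows that each data point $\bbz_i$ belongs to $\mathcal R$.

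The second step is to observe that the unknown log-log-convex function $\ell:\Rpp^n\to\Rpp$ generating the data restricts to a log-log-convex function on $\mathcal R$: indeed $\log\ell$ is convex in $\log\bbx$ over all of $\Rpp^n$, and $\log\mathcal R$ is a convex subset of $\R^n$, so the restriction inherits log-log convexity. I can therefore apply \Cref{cor:mainresult2} to $\ell$ on $\mathcal R$ with the given $\tilde\eps>0$, obtaining $T>0$ and $\psi_T\in\gpos$ for which \eqref{eq:relBoun} holds for every $\bbx\in\mathcal R$. Since $\bbz_i\in\mathcal R$ and $\ell(\bbz_i)=w_i$, specializing \eqref{eq:relBoun} at $\bbx=\bbz_i$ gives $\bigl\vert\frac{w_i-\psi_T(\bbz_i)}{\min(w_i,\psi_T(\bbz_i))}\bigr\vert\leqslant\tilde\eps$, and since the absolute value is insensitive to the sign of the numerator this is precisely the claimed bound $\bigl\vert\frac{\psi_T(\bbz_i)-w_i}{\min(\psi_T(\bbz_i),w_i)}\bigr\vert\leqslant\tilde\eps$ for $i=1,\dots,m$.

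There is essentially no real obstacle here beyond bookkeeping; the proposition is a direct consequence of \Cref{cor:mainresult2}. The only point needing a moment's care is the verification that the log-convex hull is simultaneously compact and log-convex, which reduces, via the entry-wise logarithm, to the elementary fact that the convex hull of finitely many points in $\R^n$ is compact and convex, together with the trivial remark that $\mathcal R$ contains each $\bbz_i$. Everything else is a plain substitution into \eqref{eq:relBoun}.
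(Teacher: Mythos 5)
Your proof is correct and takes essentially the same route as the paper, which also obtains the proposition as an immediate consequence of \Cref{cor:mainresult2} by choosing $\mathcal{R}$ to be the log-convex hull of the data points $\bbz_1,\dots,\bbz_m$. The extra bookkeeping you supply (that this hull is compact and log-convex, contains each $\bbz_i$, and that $\ell$ restricts to it) is exactly what the paper leaves implicit, so nothing more is needed.
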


\begin{rem}
A reasoning analogous to the one used in Remark~\ref{rem:nofit} can be employed to
show that, given a collection $\mLD$ of log-log-convex data pairs, there need not exist $\psi_T\in\gpos$
that matches exactly the data in $\mLD$, for any $T>0$.
\end{rem}}

Propositions~\ref{cor:convAppr} and \ref{cor:llconvappr} establish that functions in $\lset$ and
$\gpos$ can be used as universal smooth approximators  of convex and log-log-convex data, respectively.
However, there is a difference between the type of approximation of these two classes of functions. 
As a matter of fact, given a collection of convex data $\mD=\{(\bbx_i,y_i) \}_{i=1}^m$, the class $\lset$ is such that
there exists $f_T\in\lset$ such that  the \emph{absolute error} between $f_T(\bbx_i)$ and $y_i$ can be made arbitrarily small,
provided that $T\in\Rpp$ is sufficiently small.
On the other hand, given a collection of log-log-convex data $\mLD=\{(\bbz_i,w_i) \}_{i=1}^m$
the class $\gpos$ is such that, given $\mLD=\{(\bbz_i,w_i)\}_{i=1}^m$, 
there exists $\psi_T\in\gpos$ such that the \emph{relative error}
between $\psi_T(\bbz_i)$ and $w_i$ can be made arbitrarily small, 
provided that $T\in\Rpp$ is sufficiently small.
Figure~\ref{fig:impl} summarizes the results that have been established in this section
through \Cref{cor:convAppr} and \ref{cor:llconvappr}.

\begin{figure}[htb]
\centering
\resizebox{0.4\textwidth}{!}{
\includegraphics{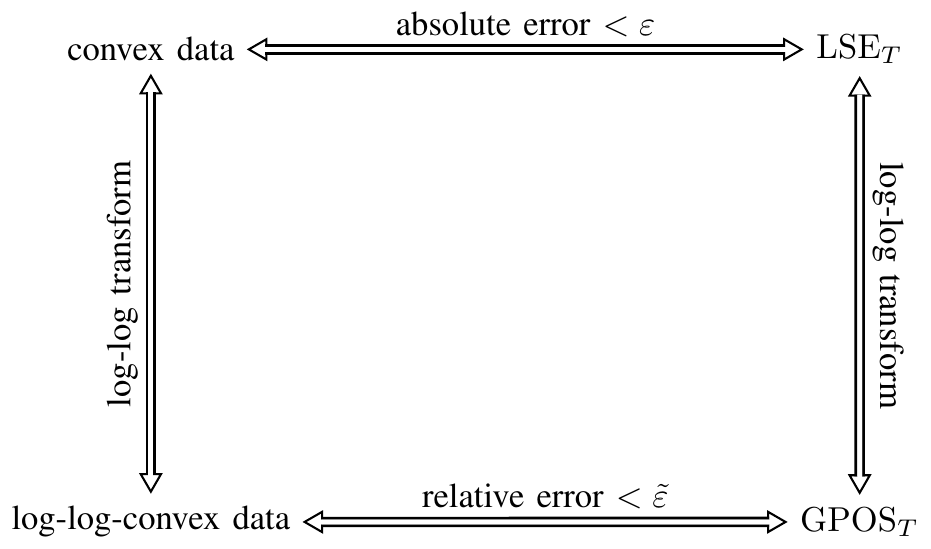}
}
\caption{Relation among the classes of functions and data.\label{fig:impl}}
\end{figure}

However, it is worth noticing that, since in any compact subset of $\Rpp^n$ bounding relative errors
is equivalent to bounding absolute errors, it follows that the class $\gpos$ is also such that
there exists $\psi_T\in\gpos$ such that the \emph{absolute error}
between $\psi_T(\bbz_i)$ and $w_i$ can be made arbitrarily small, provided that $T\in\Rpp$ is sufficiently small.

\section{Relation with feedforward neural networks\label{sec:algo}}

Functions in $\lset$ can
be modeled through a feedforward neural network (\emph{$\ffnn$}) with one hidden layer. 
Indeed, consider a $\ffnn$ 
with $n$ input nodes, one hidden layer with $K$ nodes, and one output node, as depicted in Figure~\ref{fig:appr}.

\begin{figure}[htb]
\centering
\resizebox{0.35\textwidth}{!}{
\includegraphics{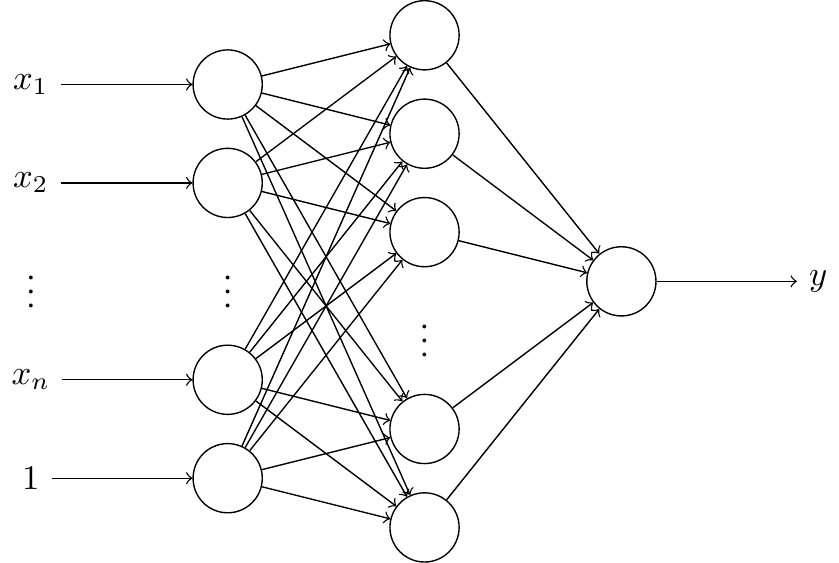}}
\caption{A feedforward neural network with one hidden layer. \label{fig:appr}}
\end{figure}

Let the activation  function of the hidden nodes be
 \[s\mapsto (\exp(s/T)),\]
 and let the activation of the output node be
 \[s\mapsto T\log(s).\]
Each node in the hidden layer  computes a term of the form  
$s_k =  \inner{\bba^{(k)}}{\bbx} +  \beta_k$,
where the $i$-th component $\alpha^{(k)}_i$ of $\bba^{(k)}$ represents the weight between node $k$ and input $x_i$, and $\beta_k$ is the bias term of node $k$. Each node $k$ thus generates activations
\[a_k = \exp (  \inner{\bba^{(k)}}{\bbx/T} +  \beta_k/T ).\]  
We consider the weights from the inner nodes to the output node to be unitary, whence the output node computes
$s = \sum_{k=1}^K a_k$ and then, according to the output activation function, the output layer returns
the value  
\[
y = T \log (s) = T \log \left(\sum_{k=1}^K a_k\right).
\]
We name such a network an $\lseffnn$.  
Comparing the expression of $y$ with (\ref{eq:lse_lse_reg_trop_exp}) it is readily seen that an
$\lseffnn$ allows us to represent any function
 in $\lset$.
We can then restate \Cref{cor:convAppr} as the  following key theorem.

\begin{thm}\label{cor:convApprFFNN}
Given a collection of convex data $\mD\doteq\{(\bbx_i,y_i)\}_{i=1}^m$
generated by an unknown convex function,
for each $\eps\in\Rpp$ there exists an $\lseffnn$ 
such that \[|f_T(\bbx_i)-y_i|\leqslant \eps,\quad i=1,\dots,m,\]
where $f_T$ is the input-output function of the $\lseffnn$.
\end{thm}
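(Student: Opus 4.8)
The plan is to observe that this theorem is just a reformulation of \Cref{cor:convAppr}, once one records the exact correspondence between the class $\lset$ and the family of input-output maps of $\lseffnn$ networks that was spelled out in the discussion preceding the statement. So the argument has essentially two steps, and all the analytic content is already contained in \Cref{cor:convAppr} (hence ultimately in \Cref{cor:mainresult} and the tropical/Ascoli arguments behind it).

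First, I would set $\mathcal{K}\doteq\operatorname{conv}\{\bbx_1,\dots,\bbx_m\}$, a compact convex subset of $\R^n$, and apply \Cref{cor:convAppr}: for the prescribed $\eps>0$ there are a temperature $T>0$ and a function $f_T\in\lset$ with $|f_T(\bbx_i)-y_i|\leqslant\eps$ for $i=1,\dots,m$. By the parameterization~\eqref{eq:lse_lse_reg_trop_exp}, this $f_T$ can be written as
\[
f_T(\bbx)=T\log\Big(\sum_{k=1}^K\exp\big(\inner{\bba^{(k)}}{\bbx/T}+\beta_k/T\big)\Big)
\]
for some $K\in\Zp$, $\bba^{(k)}\in\R^n$ and $\beta_k\in\R$.

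Second, I would exhibit an explicit $\lseffnn$ realizing $f_T$: take $n$ input nodes, a single hidden layer with $K$ nodes carrying activation $s\mapsto\exp(s/T)$, and one output node carrying activation $s\mapsto T\log(s)$; assign to the edge from input $x_i$ to hidden node $k$ the weight $\alpha^{(k)}_i$, to hidden node $k$ the bias $\beta_k$, and to every edge from a hidden node to the output node the unit weight. Propagating $\bbx$ through this network produces at the hidden nodes the activations $a_k=\exp(\inner{\bba^{(k)}}{\bbx/T}+\beta_k/T)$, and at the output node the value $T\log(\sum_{k=1}^K a_k)=f_T(\bbx)$. Hence the input-output function of this $\lseffnn$ is exactly $f_T$, and the bound from the first step concludes the proof.

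I do not expect any genuine obstacle here. The only points that need care are (i) checking that $\mathcal{K}$ is compact and convex, so that \Cref{cor:convAppr} (and through it \Cref{cor:mainresult}) applies to the underlying unknown convex generator of the data, and (ii) verifying the network-to-function dictionary for the specific activations $s\mapsto\exp(s/T)$ and $s\mapsto T\log(s)$ used in the definition of $\lseffnn$, rather than for generic activations. Both are routine bookkeeping rather than substantive difficulties, so the proof will be short.
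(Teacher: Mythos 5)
Your proposal is correct and matches the paper's route exactly: the paper offers no separate proof, presenting the theorem as a restatement of \Cref{cor:convAppr} combined with the observation (made just before the statement) that an $\lseffnn$ with activations $s\mapsto\exp(s/T)$ in the hidden layer and $s\mapsto T\log(s)$ at the output, unit output weights, biases $\beta_k$ and input weights $\alpha^{(k)}_i$, realizes precisely the functions of the form~\eqref{eq:lse_lse_reg_trop_exp}. Your two steps are exactly this correspondence made explicit.
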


\Cref{cor:convApprFFNN} can be viewed as a specialization of the Universal Approximation Theorem \cite{cybenko1989approximation,white1990connectionist,HORNIK1991251,hornik1989multilayer} 
to convex functions. 
While universal-type approximation theorems provide theoretical approximation guarantees for general FFNN on general classes of functions, our \Cref{cor:convApprFFNN} only provides 
guarantees for data generated by convex functions. However, while general FFNN synthesize nonlinear and non-convex functions, $\lseffnn$ are guaranteed to provide a convex input-output map, and this is  a key feature of interest when the
synthesized model is to be used at a later stage as a basis for optimizing over the input variables.

$\lseffnn$s can also be used to fit log-log-convex data $\mLD =\{(\bbz_i,w_i)\}_{i=1}^m$:
by applying a log-log transformation $\bbx_i = \log \bbz_i$, $y_i = \log w_i$, 
$i=1,\ldots,m$, we simply transform log-log-convex data into convex data 
 $\mD =\{(\bbx_i,y_i)\}_{i=1}^m$ and train the network on these data. Therefore, the following theorem 
 is a direct consequence of \Cref{cor:convApprFFNN} and \Cref{cor:mainresult2}.

\begin{thm}\label{cor:convApprFFNN_GPOS}
Given a collection of log-log-convex data $\mLD\doteq\{(\bbz_i,w_i)\}_{i=1}^m$
generated by an unknown log-log-convex function,
for each $\tilde{\eps}\in\Rpp$ there exists an $\lseffnn$ 
such that 
\[\left|\frac{\exp(f_T(\log(\bbz_i)))-w_i}{\min(\exp(f_T(\log(\bbz_i))),w_i)}\right|\leqslant \tilde{\eps},\quad i=1,\dots,m,\]
where $f_T$ is the input-output function of the $\lseffnn$.
\end{thm}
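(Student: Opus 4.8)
The plan is to derive Theorem~\ref{cor:convApprFFNN_GPOS} by composing two facts already established: the equivalence between $\lseffnn$s and the $\lset$ class (so that ``there exists an $\lseffnn$'' is interchangeable with ``there exists $f_T\in\lset$''), and the approximation guarantee for log-log-convex data, namely Proposition~\ref{cor:llconvappr} (itself a consequence of \Cref{cor:mainresult2}). Concretely, given the log-log-convex data $\mLD=\{(\bbz_i,w_i)\}_{i=1}^m$ generated by a log-log-convex $\ell$, I would invoke Proposition~\ref{cor:llconvappr} to obtain, for the prescribed $\tilde\eps>0$, a temperature $T>0$ and a generalized posynomial $\psi_T\in\gpos$ with $\bigl|\frac{\psi_T(\bbz_i)-w_i}{\min(\psi_T(\bbz_i),w_i)}\bigr|\leqslant\tilde\eps$ for $i=1,\dots,m$.

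The second step is to rewrite $\psi_T$ in $\lseffnn$ form. By the one-to-one correspondence in \Cref{prop:lse-gposmapping}, since $\psi_T\in\gpos$ we have $\log(\psi_T(\exp(\bbx)))\in\lset$; call this function $f_T$. Equivalently, retracing the computation in the proof of \Cref{cor:mainresult2}, $\psi_T(\bbz)=\exp(f_T(\log\bbz))$, where $f_T(\bbq)=T\log\bigl(\sum_{k=1}^K\exp(\beta_k/T+\inner{\bba^{(k)}}{\bbq/T})\bigr)$ for the parameters defining $\psi_T$ (with $c_k=b_k^{1/T}=\exp(\beta_k/T)$). By the discussion in Section~\ref{sec:algo}, this $f_T$ is exactly the input-output map of an $\lseffnn$ with hidden-layer activation $s\mapsto\exp(s/T)$, output activation $s\mapsto T\log s$, hidden weights $\bba^{(k)}$, and biases $\beta_k$. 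Substituting $\psi_T(\bbz_i)=\exp(f_T(\log\bbz_i))$ into the relative-error bound from step one yields precisely
\[
\left|\frac{\exp(f_T(\log(\bbz_i)))-w_i}{\min(\exp(f_T(\log(\bbz_i))),w_i)}\right|\leqslant\tilde\eps,\quad i=1,\dots,m,
\]
which is the claim.

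There is essentially no genuine obstacle here: the theorem is a bookkeeping corollary that merely re-expresses Proposition~\ref{cor:llconvappr} in the network language, using that the $\lseffnn$ realizes exactly the $\lset$ family and that the $\lset$--$\gpos$ correspondence is the exponential change of variables $\bbz=\exp(\bbx)$. The only point requiring a sentence of care is to make sure the data points $\bbz_i$ lie in the domain on which \Cref{cor:mainresult2} applies; this is handled as in Proposition~\ref{cor:llconvappr} by taking $\mathcal R$ to be the log-convex hull of $\{\bbz_1,\dots,\bbz_m\}$, a compact log-convex subset of $\Rpp^n$ containing all the data, on which $\ell$ (being log-log-convex, hence continuous after the log-log transform) satisfies the hypotheses. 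With that, the chain Proposition~\ref{cor:llconvappr} $\Rightarrow$ \Cref{prop:lse-gposmapping}/Section~\ref{sec:algo} identification $\Rightarrow$ substitution closes the proof.
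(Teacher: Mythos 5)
Your proposal is correct and follows essentially the same route as the paper, which simply notes that the theorem is a direct consequence of the established approximation results (via \Cref{cor:mainresult2}/Proposition~\ref{cor:llconvappr}) combined with the exponential change of variables and the fact that an $\lseffnn$ realizes exactly the $\lset$ class. Your version just spells out the bookkeeping (citing Proposition~\ref{cor:llconvappr} and \Cref{prop:lse-gposmapping} explicitly, and taking $\mathcal{R}$ as the log-convex hull of the data), which matches the paper's intended argument.
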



\subsection{Implementation considerations}
Given training data $\mD =\{(\bbx_i,y_i)\}_{i=1}^m$, and for fixed $K$ and $T>0$,
the network weights $\bba^{(1)},\dots,\bba^{(K)},\beta_1,\ldots,\beta_K$ can be determined via standard training algorithms, such as
the Levenberg-Marquardt algorithm \cite{marquardt1963algorithm}, the gradient descent with momentum \cite{sutskever2013importance},
or the Fletcher-Powell conjugate gradient \cite{scales1985introduction}, which are, for instance, efficiently implemented in \texttt{Matlab}  
through the \texttt{Neural Network Toolbox} \cite{nntoolbox}.
These algorithms tune the network's weights in order to minimize 
a loss criterion  of the form 
\[L = \sum_{i=1}^m L_i(f_T(\bbx_i) -y_i ) + {R},
\]
 where the observation loss $L_i$ is typically a standard quadratic or an absolute value loss, and  ${ R}$ is a regularization term that does not depend on the training data.
For given network parameters $\overrightarrow{\bba}$ 
and $\bbb 
$, using \eqref{eq:scaling} we observe that
\begin{eqnarray*}
f_T^{(\overrightarrow{\bba},\bbb)}(\bbx_i) -y_i &=&   
T f_1^{(\overrightarrow{\bba},\bbb/T)}(\bbx_i/T) -y_i \\
&=& T \left(f_1^{(\overrightarrow{\bba},\bbb/T)}(\bbx_i/T) - y_i/T  \right).
\end{eqnarray*}
Hence, the loss term $\sum_{i=1}^m L_i(f_T(\bbx_i) -y_i )$ is proportional to
$\sum_{i=1}^m L_i(f_1(\bbx_i/T) -y_i/T )$ for the usual quadratic and absolute losses.
Thus, the temperature $T>0$ can be implemented in practice by pre-scaling the data
(i.e., divide the inputs and outputs by $T$), and then feeding such scaled data to an $\lseffnn$
which synthesizes a function in the LSE  (or, equivalently, 
$\lse_1$) class, having
 activations $s\mapsto\exp(s)$ in the hidden layer and 
$s\mapsto \log(s)$ in the output layer. 

Training and simulation of this type of $\lseffnn$ is implemented in a package we developed,
which works in conjunction with \texttt{Matlab}'s \texttt{Neural Network Toolbox}.

 In numerical practice, we shall fix $T>0$ and a value of $K$, train the network 
 with respect to the remaining model parameters as detailed above, 
and possibly iterate by adjusting $T$ and $K$, until a satisfactory fit is eventually found on validation data.
Here, the parameter $T$ controls the {\em smoothness} of the fitting function (as $T$ increases $f_T$ becomes ``smoother''), 
and the parameter $K$ controls the {\em complexity} of the model class (as $K$ increases $f_T$ becomes more complex). 

\section{Applications to physical examples\label{sec:appli}}
We next illustrate the proposed methodology with practical numerical examples.
In Section~\ref{sec:vibration} we find a convex model expressing the amount of 
vibration transmitted by a vehicle suspension system as a function of  its mechanical parameters. Similarly, in Section~\ref{sec:propane},
we derive a convex model 
 relating the peak power generated by the chemical reaction of propane combustion as a function of the  initial concentrations of all the involved chemical species. 

These models are first trained using the gathered data, and next used for  design (e.g., find concentrations that maximize power) by solving 
convex optimization and geometric programming problems via efficient numerical algorithms.
This two-step process (model training followed by model exploitation for design)
embodies an effective tool for performing data-driven optimization of complex physical processes. 

\subsection{Vibration transmitted by a vehicle suspension system\label{sec:vibration}}

In this numerical experiment, we considered the problem of identifying an $\lset$ and a 
$\gpos$ model for the amount of whole-body vibration transmitted by a vehicle suspension system having 11 degrees of freedom, as depicted in
Figure~\ref{fig:model}.
\begin{figure}[htb!]
\centering
\resizebox{0.35\textwidth}{!}{
\includegraphics{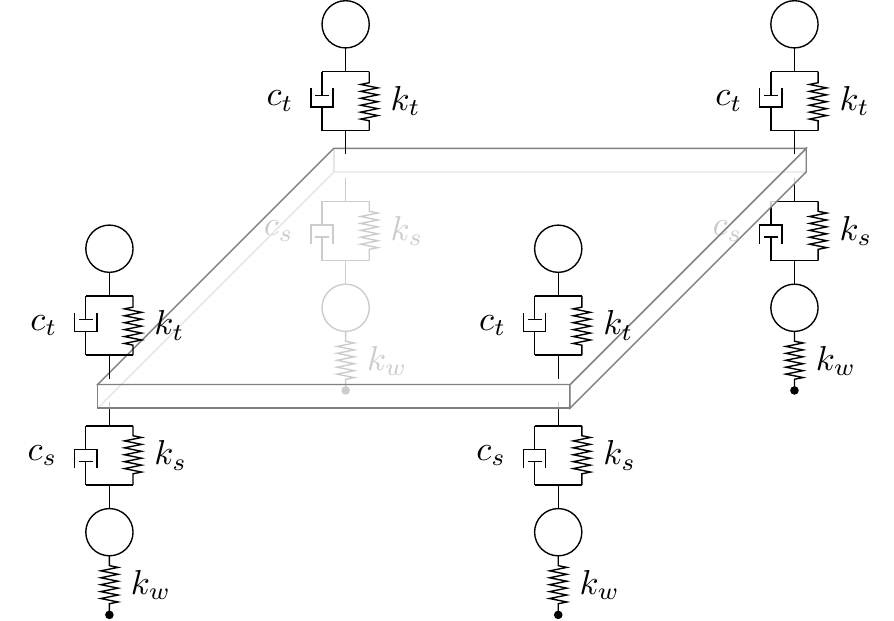}
}
\caption{Model for the vehicle  suspension system. 
\label{fig:model}}
\end{figure}

The model of the vehicle is taken from \cite{zareh2012semi} and includes the dynamics of the seats, the wheels, and
the suspension system. In order to measure the amount of vibration transmitted by the vehicle,
it is assumed that the left wheels of the vehicle are moving on a series of two bumps with constant speed, see \cite{zareh2012semi}
for further details on the vehicle model. 

The amount of whole-body vibration is measured following the international standard ISO-2631, \cite{ISO2631}.
Namely, the vertical acceleration of the left-set  $a(t)$
is  frequency-weighted through the function $H(t)$ following the directions given in Appendix~A of \cite{ISO2631}, 
thus obtaining the filtered signal 
\begin{equation*}
a_w(t)=\int_{0}^{t}H(t-\tau) \,a(\tau)\,\mathrm{d}\tau.
\end{equation*}
Then, the amount of transmitted vibration is computed as
\begin{equation*}
V= \left(\frac{1}{\Theta}\int_0^\Theta a_w^2(t)dt\right)^{\frac{1}{2}},
\end{equation*}
where $\Theta$ is the simulation time.
Clearly, $V$ is a complicated function of the input parameters $k_w$, $k_s$, $k_t$, $c_s$, and $c_t$, that can be evaluated by simulating the dynamics given in \cite{zareh2012semi}.
However, manipulation and parameter design
using direct and exact evaluation of $V$
 by integration of the dynamics  can be very costly 
 and time consuming. Therefore, 
 we are interested in obtaining a simpler model via an $\lseffnn$.

 It is here worth noticing that, in practice, we may  not know whether the function we are dealing with is convex, log-log-convex, or neither. Nevertheless, by fitting an $\lseffnn$ to the observed data we can obtain a convex (or log-log-convex) function approximation of  the data, and check directly a posteriori if the approximation error is satisfactory or not. 
 Further, certain types of physical data may suggest that a $\gpos$ model might be suitable for modeling them: it is the case with data where the inputs are physical quantities that are inherently positive (e.g., in this case, stiffnesses and damping coefficients) and likewise the output quantity is also inherently positive (e.g., the mean-squared acceleration level).
 
In this example, we identified both a model in $\lse_{T}$ and a model in $\mathrm{GPOS}_{T}$ for $V$.
Firstly, a set of 250 data points
\[\mS\doteq\{(\bbx_i, V_i) \}_{i=1}^{250},\]
has been gathered by simulating the dynamics of the system depicted in Figure~\ref{fig:model}
for randomly chosen values of $k_w$, $k_s$, $k_t$, $c_s$, and $c_t$ with the distributions shown in Table~\ref{tab:param}.

\begin{table}[htb!]
\caption{Distribution of the parameters in the multi-body simulations.\label{tab:param}}
\centering
{\renewcommand{\arraystretch}{1.1}
\renewcommand{\tabcolsep}{6pt}
\begin{tabular}{lcc}
\hline
Parameter & Distribution & Dimension\\
\hline
$k_w$ (stiffness of the wheel) & $\mN(175.41,\,17.1)$ & $\mathrm{kN/m}$\\
$k_s$ (stiffness of the suspension) & $\mN(17.424,\,1.72)$ & $\mathrm{kN/m}$\\
$k_t$ (stiffness of the set) & $\mN(1.747,\,0,17)$ & $\mathrm{kN/m}$\\
$c_s$ (damping of the suspension) & $\mN(1.465,\,0.15)$ & $\mathrm{kN\,s/m}$\\
$c_t$ (damping of the seat) & $\mN(0.697,\,0.07)$ & $\mathrm{kN\,s/m}$\\
\hline
\end{tabular}
}
\end{table}

\noindent
An  $\lseffnn$ with
 5 inputs 
 and $K=10$ hidden neurons
 has been implemented by interfacing
the \texttt{Neural Network} toolbox \cite{nntoolbox} with 
a  \texttt{Convex\_Neural\_Network} module that we developed, which provides
a \texttt{convexnet}  function that can be used for training the $\lseffnn$.
%
The temperature parameter $T$ 
has been determined via a campaign of several  cross validation experiments with varying values of $T$.
For the $\lse_{T}$, the best  model   in terms of mean absolute error has been obtained for $T=0.01$.
The same temperature value has been obtained  for the $\mathrm{GPOS}_{T}$ models.
%

After the training, the outputs of the $\lse_{T}$ and $\mathrm{GPOS}_{T}$ models
to the inputs $\{\bbx_i\}_{i=201}^{250}$ (which have not been used for training) have been
compared with $\{V_i \}_{i=201}^{250}$, with the outputs of a classical $\ffnn$ with 
symmetric sigmoid activation function for the hidden layer (with the same number of
hidden nodes) and linear activation function for the output layer and with the output of an $\ma$
function (with $10$ terms), which has been trained on the same data. In particular, the
$\ma$ function has been trained by using the heuristic given in \cite{magnani2009convex},
whereas the $\ffnn$, the $\lset$ and the $\gpos$ networks have been trained by using
the \texttt{Neural Network Toolbox}.
Figure~\ref{fig:num} and \ref{fig:errors} depict the estimates and the approximation errors obtained 
by using the $\ffnn$, $\ma$, $\lse_{T}$, $\mathrm{GPOS}_{T}$  models, whereas 
Table~\ref{tab:predErr} summarizes the error of each model.

\begin{figure}[htb!]
\centering
\includegraphics{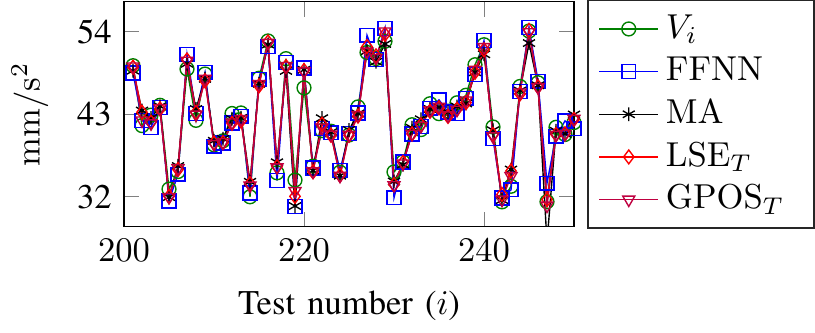}
\caption{Results of the numerical tests.\label{fig:num}}
\end{figure}

\begin{figure}[htb!]
\centering
\includegraphics{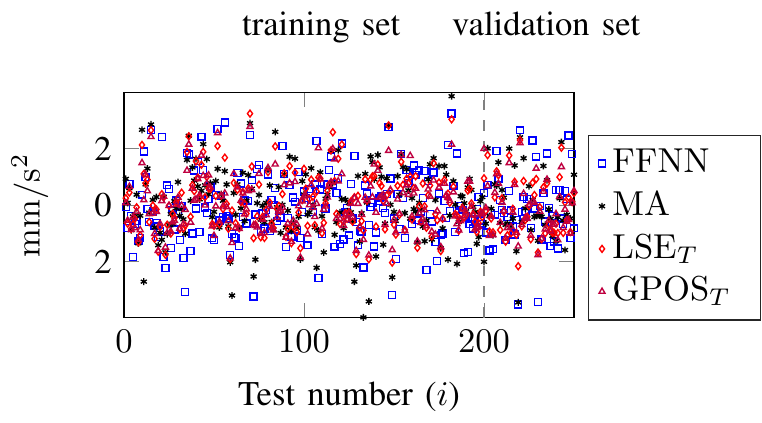}
\caption{Absolute approximation errors.\label{fig:errors}}
\end{figure}

\begin{table}[htb!]
\caption{Prediction errors\label{tab:predErr}}
\centering
{\renewcommand{\arraystretch}{1.2}
\renewcommand{\tabcolsep}{3pt}
\begin{tabular}{ccccc}
\hline
Method  & Mean abs. err. & Mean rel. err.& Max abs. err. & Max rel. err.\\
\hline
$\ffnn$  & $0.95\,\mathrm{mm/s^2}$& $2.36\%$ & $3.54\,\mathrm{mm/s^2}$ & $11.6\%$\\
$\ma$  & $0.96\,\mathrm{mm/s^2}$& $2.45\%$ & $5.38\,\mathrm{mm/s^2}$ & $20.73\%$\\
$\lse_{T}$  & $0.81\,\mathrm{mm/s^2}$& $1.98\%$& $3.25\,\mathrm{mm/s^2}$ & $7.85\%$\\
$\mathrm{GPOS}_{T}$ & $0.71\,\mathrm{mm/s^2}$& $1.69\%$& $2.79\,\mathrm{mm/s^2}$ & $5.87\%$\\
\hline
\end{tabular}
}
\end{table}

As shown by Table~\ref{tab:predErr}, the $\gpos$ model has the best performance in terms of absolute and relative errors. 

{The $\ffnn$ model $\phi$, the $\ma$ model $f_0$, 
the $\lse_{T}$ model $f_{T}$, and the $\mathrm{GPOS}_{T}$ model $\psi_{T}$ 
 have next been used to design the parameters $\bbx$ that minimize the amount of vibration~$V$.
Namely, letting $\bar{\bbx}$ be the mean value of the random variable used to find the models,
the nonlinear programming problem
\begin{equation} \label{eq:nonlinear}
\left\vert \begin{array}{rl}
\text{minimize }& \phi(\bbx)\\
\text{subject to } & 0.9 \,\bar{\bbx} \leqslant \bbx \leqslant 1.1\,\bar{\bbx} ,\\
\end{array}\right.
\end{equation}
has been solved by using the \texttt{Matlab} function \texttt{fmincon}. Similarly,
the convex optimization problems
\begin{equation} \label{eq:convexMA}
\left\vert \begin{array}{rl}
\text{minimize }& f_{0}(\bbx)\;\mbox{[or $f_T(\bbx)$]}\\
\text{subject to } &  0.9 \,\bar{\bbx} \leqslant \bbx\leqslant 1.1\,\bar{\bbx} ,\\
\end{array}\right.
\end{equation}
and the geometric program
\begin{equation} \label{eq:geometricProgram}
\left\vert \begin{array}{rlll}
\text{minimize }& \psi_{T}(\bbx)\\
\text{subject to } & 0.9 \,\bar{x}_i\, x_i^{-1}\leqslant 1, &\quad i=1,\dots,5,\\
& 1.1 \,\bar{x}_i^{-1}\, x_i\leqslant 1, &\quad i=1,\dots,5,\\
\end{array}\right.
\end{equation}
have been solved by using \texttt{CVX}, a package for solving convex and geometric programs \cite{boyd2007tutorial,cvx,gb08,duffin1967geometric}.
Then, the dynamics of the multibody system depicted in Figure~\ref{fig:model} have been simulated 
with the optimal values gathered by solving either~\eqref{eq:nonlinear}, \eqref{eq:convexMA}, 
or~\eqref{eq:geometricProgram}. 
The computation of the solution to~\eqref{eq:nonlinear} required $4.813\,\mathrm{s}$ (that is larger than
the computation times reported in Table~\ref{tab:optimVal} due to the fact that convex optimization tools cannot be used)
and the computed optimal solution led to an amount of total vibration equal to $46.525\,\mathrm{mm/s^2}$.
On the other hand, the results obtained by solving   \eqref{eq:convexMA}, 
and~\eqref{eq:geometricProgram}
are reported in Table~\ref{tab:optimVal}. 
\begin{table}[htb!]
\caption{Results of the simulations with the optimal values\label{tab:optimVal}}
\centering
{\renewcommand{\arraystretch}{1.2}
\renewcommand{\tabcolsep}{8pt}
\begin{tabular}{ccc}
\hline
Problem solved  & Computing time & Amount of vibration\\
\hline
$f_0$
&   $0.922\,\mathrm{s}$ &  $30.513\,\mathrm{mm/s^2}$\\
$f_T$ & $1.628\,\mathrm{s}$ & $29.723\,\mathrm{mm/s^2}$ \\
$\psi_T$& $0.554\,\mathrm{s}$ & $29.609\,\mathrm{mm/s^2}$\\
\hline
\end{tabular}}
\end{table}

The results given in Table~\ref{tab:optimVal} highlight the effectiveness of the $\gpos$ model, which indeed yields the best design in the least computing time.}


\subsection{Combustion of propane\label{sec:propane}}

In this numerical experiment, we considered the problem of identifying a convex 
and a log-log-convex model for the peak power 
generated through the combustion of propane. We considered the reaction network for the combustion
of propane presented in \cite{jachimowski1984chemical}, which consists of 83
reactions and 29 chemical species, see Figure~\ref{fig:stoic} for a graphical representation of the stoichiometric matrix
of this chemical reaction network.

\begin{figure}[thb]
\centering
\resizebox{0.22\textwidth}{!}{
\includegraphics{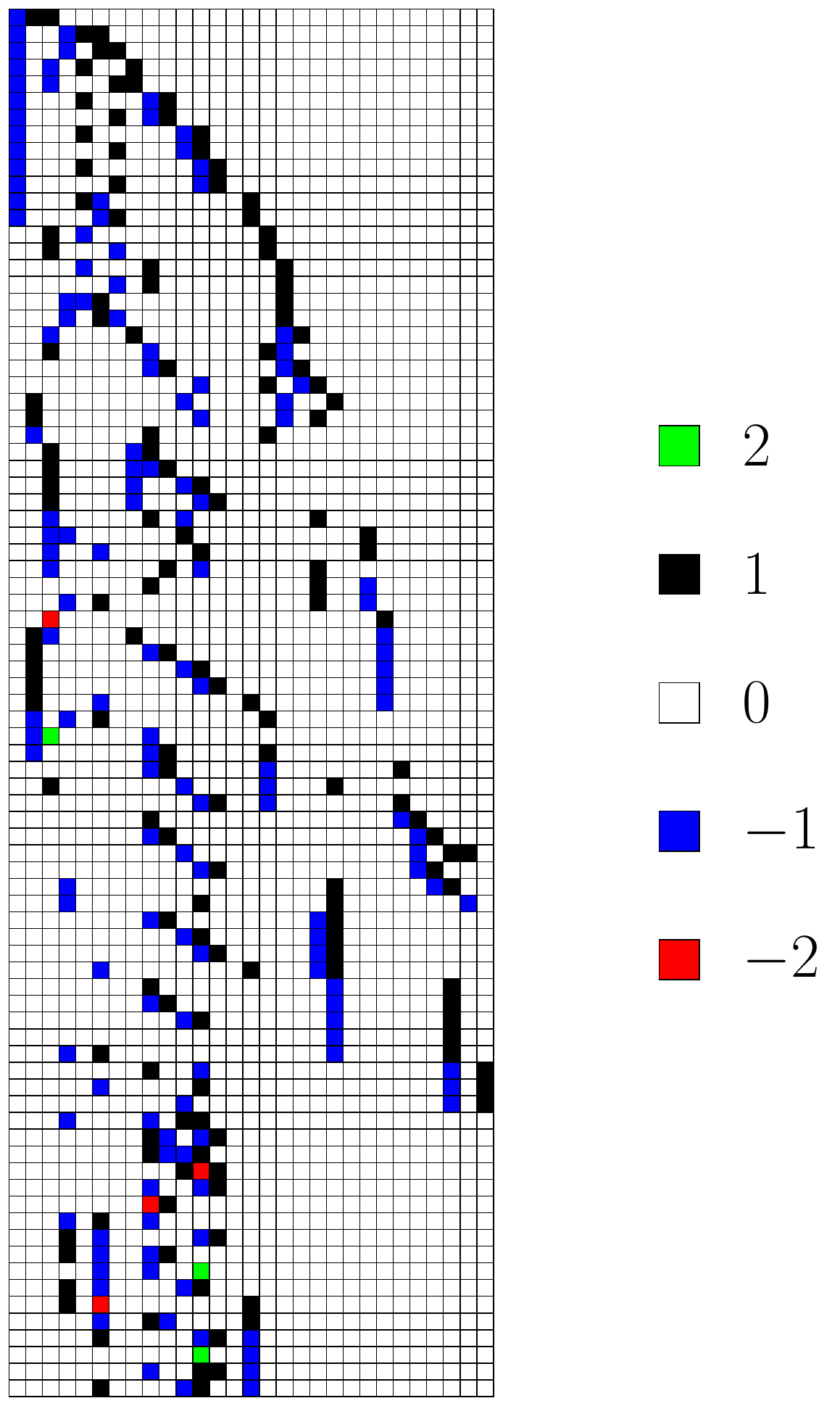}
}
\caption{Stoichiometric matrix of the reaction network for the combustion of propane, in which
each row corresponds to a different chemical reaction and each column corresponds to 
a different chemical species.\label{fig:stoic}}
\end{figure}

The instantaneous power generated by the combustion  is approximatively given by
\begin{subequations}\label{eq:totalpower}
\begin{equation}\label{eq:power}
P(t)=\Delta_c H^e \frac{\mathrm{d}\, m(t)}{\mathrm{d}\,t},
\end{equation}
where $\Delta_c H^e$ is the calorific value of the propane ($\simeq 2220\,\mathrm{kJ}/\mathrm{mol}$)
and $m(t)$ denotes the number of moles of propane. Hence, the peak power generated by this reaction~is
\begin{equation}\label{eq:peakpower}
\overline{P}\doteq\max_{t\in\Rp} P(t).
\end{equation}
\end{subequations}
Clearly, $\overline{P}$ is a function of the initial concentrations $\bbx$ of all the involved chemical species
and can be obtained by performing exact numerical simulations of the chemical network (e.g., by using
the stochastic simulation algorithm given in \cite{possieri2018stochastic}), taking averages to determine
the mean behavior of $m(t)$, and using \eqref{eq:totalpower}. However, performing all these computations
is rather costly, due to the fact that a large number of simulations has to be performed in order to
take average. Hence, in order to maximize the effectiveness of the combustion, it is more convenient
to obtain a simplified ``surrogate'' model relating $\bbx$ and $\overline{P}$. In particular, convex and log-log-convex models relating $\bbx$ 
with $\overline{P}^{-1}$ seem to be  appealing since they allow the design of the initial concentrations
that maximizes $\overline{P}$ by means of computationally efficient algorithms.

In this example, we identify a model in $\lse_{T}$  and a model
in $\mathrm{GPOS}_{T}$ for  ${\overline{P}}^{-1}$ as a function of $\bbx$. 
We observe that, also in this example, $\mathrm{GPOS}_{T}$ models appear to be potentially well adapted to the physics of the problem, since all input variables are positive concentrations of chemicals, and the output (peak power)  is also positive.
First, a collection of 500 data points \[\mS\doteq\{(\bbx_i,\overline{P}_i^{-1}) \}_{i=1}^{500}\] has been gathered
by choosing randomly the initial condition $\bbx_i$ of the 
chemical reaction network with uniform distribution in the interval $[1.494,1.827]\,\mathrm{pmol/m^3}$,
by performing $1000$ simulations of the chemical reaction network through the algorithm given
in \cite{possieri2018stochastic}, by taking averages to determine the expected time behavior of
$m(t)$, and using \eqref{eq:totalpower} to determine the value of $\overline{P}_i$, $i=1,\dots,500$.

Then, the function \texttt{convexnet} of the toolbox  \texttt{Convex\_Neural\_Network} has been used 
to design an $\lseffnn$  with
$n= 29$ input nodes, 1 output node, and 1 hidden layer with $K=3$  neurons.
Several cross-validation experiments have been executed preliminarily in order to determine a satisfactory value for the temperature parameter $T$, which resulted to be $T=0.01$ for $\lse_{T}$ models and 
$T=0.005$ for $\mathrm{GPOS}_{T}$ models.
%

After training the network,  we considered  the inputs $\{\bbx_i\}_{i=251}^{500}$ (which have not been used for training)
and computed the corresponding  outputs for  the $\lse_{T}$ model $f_T$ and of the $\mathrm{GPOS}_{T}$ model $\psi_T$ . These outputs 
 are compared with $\{\overline{P}_i^{-1} \}_{i=251}^{500}$, with the outputs of a classical $\ffnn$ with 
symmetric sigmoid activation function for the hidden layer (with the same number of
hidden nodes) and linear activation function for the output layer, which has been trained by using the 
same data, and with the outputs of an $\ma$ function $f_0$  (with 3 terms)
that has been trained on the same data by using the 
heuristic given in \cite{magnani2009convex}.
Figure~\ref{fig:num2} and \ref{fig:errors2} depict the estimates and the approximation errors obtained 
by using the $\ffnn$, $\ma$, $\lse_{T}$ and $\mathrm{GPOS}_{T}$  models, whereas 
Table~\ref{tab:predErr2}
summarizes the prediction error of each model.

\begin{figure}[htb!]
\centering
\includegraphics{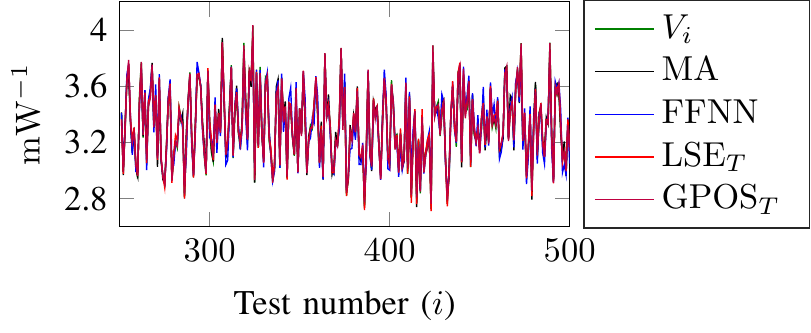}
\caption{Results of the numerical tests.\label{fig:num2}}
\end{figure}

\begin{figure}[htb!]
\centering
\includegraphics{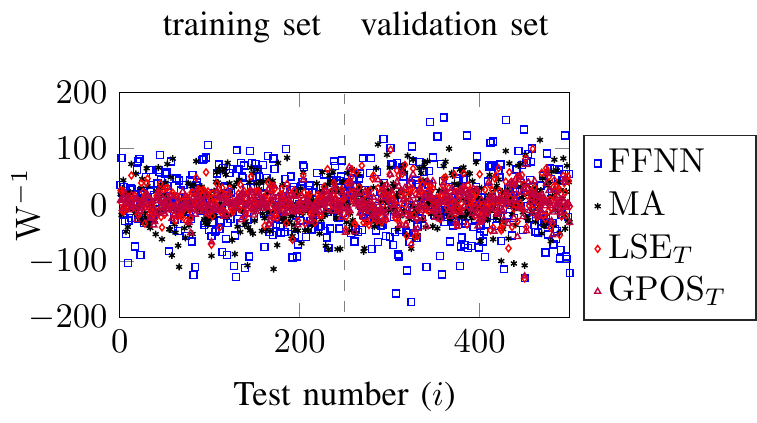}
\caption{Approximation errors.\label{fig:errors2}}
\end{figure}

\begin{table}[htb!]
\caption{Prediction errors\label{tab:predErr2}}
\centering
{\renewcommand{\arraystretch}{1.2}
\renewcommand{\tabcolsep}{4pt}
\begin{tabular}{ccccc}
\hline
Method & Mean abs. err. & Mean rel. err. & Max abs. err. & Max rel. err.\\
\hline
$\ffnn$ & $42.15\,\mathrm{W^{-1}}$ & $1.28\%$ & $172.75\,\mathrm{W^{-1}}$& $4.62\%$ \\
$\ma$ & $30.38\,\mathrm{W^{-1}}$ & $9.27\%$ & $115.81\,\mathrm{W^{-1}}$& $4.33\%$ \\
$\lse_{T}$  & $20.48\,\mathrm{W^{-1}}$ & $0.62\%$ & $130.63\,\mathrm{W^{-1}}$& $4.18\%$ \\
$\mathrm{GPOS}_{T}$  & $14.94\,\mathrm{W^{-1}}$ & $0.45\%$ & $126.31\,\mathrm{W^{-1}}$& $4.03\%$ \\
\hline
\end{tabular}
}
\end{table}

As shown by Table~\ref{tab:predErr2}, the $\lse_{T}$ and the $\mathrm{GPOS}_{T}$ models
have improved prediction capabilities with respect to the classical $\ffnn$ 
and the $\ma$ model. In particular, the
model in $\mathrm{GPOS}_{T}$ presents the best approximation performance.
Moreover, the model $f_0$ in $\ma$ , the model $f_{T}$ in $\lse_{T}$ and the model $\psi_{T}$ in $\mathrm{GPOS}_{T}$,
can be used
to efficiently design the initial concentrations $\bbx$ that are within the considered range and that maximize the peak power.
In fact, the convex optimization problems
\begin{equation} \label{eq:convexMA2}
\left\vert \begin{array}{rl}
\text{minimize }& f_{0}(\bbx) \;\mbox{[or $f_{T}(\bbx)$]}\\
\text{subject to } &  1.49\cdot 10^{-12} \leqslant \bbx \leqslant 1.83\cdot 10^{-12} , 
\end{array}\right.
\end{equation}
as well as the geometric program
\begin{equation} \label{eq:geometricProgram2}
\left\vert \begin{array}{rll}
\text{minimize }& \psi_{T}(\bbx)\\
\text{subject to } & 1.49\cdot 10^{-12}\, x_i^{-1}\geqslant1,& i=1,\dots,29,\\
& 1.83\cdot 10^{-12} \, x_i\geqslant1,& i=1,\dots,29,\\
\end{array}\right.
\end{equation}
can be efficiently solved by using any solver able to deal with convex optimization problems.
On the other hand, letting $\phi$ be the $\ffnn$ model, solving a problem
of the form \eqref{eq:convexMA2} with $\phi$ as the objective
may be rather challenging due to the fact that it need not be (and generically is not) convex.
In fact, by attempting at solving such nonlinear programming problem via 
\texttt{fmincon} we failed,  whereas we have
been able to find the solutions of problems~\eqref{eq:convexMA2} and \eqref{eq:geometricProgram2}
by using the \texttt{Matlab} toolbox \texttt{CVX}.
Table~\ref{tab:optimVal2} reports the computing time required to determine such solutions
and the corresponding peak power obtained by simulating the chemical reaction network.

\begin{table}[htb!]
\caption{Results of the simulations with the optimal values\label{tab:optimVal2}}
\centering
{\renewcommand{\arraystretch}{1.2}
\renewcommand{\tabcolsep}{8pt}
\begin{tabular}{ccc}
\hline
Problem solved  & Computing time & Peak power\\
\hline
$f_0$ & $1.279\mathrm{s}$ & $0.363\,\mathrm{m W}$ \\
$f_t$ & $2.968\mathrm{s}$ & $0.368\,\mathrm{m W}$ \\
$\psi_T$ & $0.655\,\mathrm{s}$ & $0.369\,\mathrm{mW}$\\
\hline
\end{tabular}}
\end{table}

As shown by Table~\ref{tab:optimVal2}, the model in $\mathrm{GPOS}_{T}$
presents the best performance in the considered example. 


\section{Conclusions\label{sec:concl}}
A feedforward neural network with exponential activation functions in the inner layer and logarithmic activation in the output layer
can approximate with arbitrary precision any convex function on a convex and compact input domain. 
Similarly,  any log-log-convex function can be approximated to arbitrary relative precision by a class of generalized posynomial functions. 
This allows us to construct convex (or log-log-convex) models that approximate  observed data, with the advantage over standard feedforward networks that the synthesised input-output map is convex (or log-log-convex) in the input variables, which makes it readily amenable to efficient optimization via convex or geometric programming.

The techniques given in this paper enable data-driven
optimization-based design methods that apply convex optimization on a surrogate
model obtained from data.
Of course, some data might be more suitable than other to be approximated via
convex or log-log-convex models: if the (unknown) data generating function underlying the observed data is indeed convex (or log-log-convex), then we may expect very good results in the fitting via LSE$_T$  functions (or $\gpos$ functions). Actually, even when 
convexity or log-log-convexity of the data generating function is not known a priori, we can find in many cases a data fit that is of quality comparable, or even better, than the one obtained via general non-convex neural network models, with the clear advantage of having a model possessing the additional and desirable feature of convexity.  


\bibliographystyle{ieeetr}

\begin{IEEEbiographynophoto}{Giuseppe C. Calafiore}
 (S '14, F '18) received the ``Laurea'' degree in Electrical Engineering from Politecnico di Torino in 1993, and the Ph.D. degree in Information and System Theory from Politecnico di Torino, in 1997. He is with the faculty of Dipartimento di Electronics and Telecommunications, Politecnico di Torino, where he currently serves as a full professor and coordinator of the Systems and Data Science lab.
Dr. Calafiore held several visiting positions at international institutions: at the Information Systems Laboratory (ISL), Stanford University, California, in 1995; at the Ecole Nationale Sup\'erieure de Techniques Avance\'es (ENSTA), Paris, in 1998; and at the University of California at Berkeley, in 1999, 2003 and 2007. He had an appointment as a Senior Fellow at the Institute of Pure and Applied Mathematics (IPAM), University of California at Los Angeles, in 2010. He had appointments as a Visiting Professor at EECS UC Berkeley in 2017 and at the Haas Business School in 2018.
He is a Fellow of the Italian National Research Council (CNR). He has been an Associate Editor for the IEEE Transactions on Systems, Man, and Cybernetics (T-SMC), for the IEEE Transactions on Automation Science and Engineering (T-ASE), and for Journal Europ\'een des Syst\'emes Automatis\'es (JESA). He currently serves as an Associate Editor for the IEEE Transactions on Automatic Control.
 Dr. Calafiore is the author of more than 180 journal and conference proceedings papers, and of eight books. He is a fellow member of the IEEE since 2018. He received the IEEE Control System Society ``George S. Axelby'' Outstanding Paper Award in 2008.  His research interests are in the fields of convex optimization, randomized algorithms, machine learning, computational finance, and identification and control of uncertain systems.

\end{IEEEbiographynophoto}

\begin{IEEEbiographynophoto}{Stephane Gaubert}
graduated from \'Ecole
Polytechnique, Palaiseau, in 1988. He got a PhD degree in Mathematics
and Automatic Control from \'Ecole Nationale Supérieure des Mines de
Paris in 1992. He is senior research scientist (Directeur de
Recherche) at INRIA Saclay -- \^Ile-de-France and member of CMAP (Centre
de Math\'ematiques Appliqu\'ees, \'Ecole Polytechnique, CNRS), head of a joint
research team, teaching at \'Ecole Polytechnique.  He coordinates
the Gaspard Monge corporate sponsorship Program for Optimization and Operations Research (PGMO),
of Fondation Math\'ematique Hadamard. 
His interests include tropical geometry, optimization,  game theory, monotone or nonexpansive
dynamical systems, and applications of mathematics to decision making and to the verification of programs
or systems. 
\end{IEEEbiographynophoto}

\begin{IEEEbiographynophoto}{Corrado Possieri}
received his bachelor's and master's degrees in Medical engineering and his Ph.D. degree in Computer Science, Control and Geoinformation from the University of Roma Tor Vergata, Italy, in 2011, 2013, and 2016, respectively. 
From September 2015 to June 2016, he visited the University of California, Santa Barbara (UCSB). 
Currently, he is Assistant Professor at the Politecnico di Torino.
He is a member of the IFAC TC on Control Design.
His research interests include stability and control of hybrid systems, the application of computational algebraic geometry techniques to control problems, stochastic systems, and optimization.
\end{IEEEbiographynophoto}

\end{document}